\newcommand{\W}{{\mathcal{W}}}
\newcommand{\WU}[1]{\W^{#1}}
\newcommand{\WL}[1]{\W_{#1}}
\newcommand{\WUL}[2]{{\WU{#1}_{#2}}}
\newcommand{\WLU}[2]{{\WL{#1}^{#2}}}
\newcommand{\WB}{{\mathcal{B}}}
\newcommand{\WBL}[1]{\WB_{#1}}
\newcommand{\VWB}{{\mathcal{VB}}}
\newcommand{\VWBL}[1]{\VWB_{#1}}
\newcommand{\VW}{{\widetilde{\mathcal{W}}}}
\newcommand{\VWL}[1]{\VW_{#1}}
\newcommand{\VWLU}[2]{{\VWL{#1}^{#2}}}
\newcommand{\Dist}{{c}}
\newcommand{\h}{{h}}
\newcommand{\hL}[1]{\h_{#1}}
\newcommand{\hh}{{\bm{h}}}
\newcommand{\hhL}[1]{\hh_{#1}}
\newcommand{\PI}{{\pi}}
\newcommand{\PIL}[1]{\PI_{#1}}
\newcommand{\PS}{{\Pi}}
\newcommand{\PSL}[1]{\PS_{#1}}
\newcommand{\Region}{{\mathcal{R}}}
\newcommand{\RegionL}[1]{\Region_{#1}}
\newcommand{\RegionLU}[2]{\RegionL{#1}^{#2}}
\newcommand{\RRegion}{{\bm{\mathcal{R}}}}
\newcommand{\RRegionL}[1]{\RRegion_{#1}}
\newcommand{\x}{{x}}
\newcommand{\xL}[1]{\x_{#1}}
\newcommand{\X}{{\mathcal{X}}}
\newcommand{\XL}[1]{\X_{#1}}
\newcommand{\y}{{y}}
\newcommand{\yL}[1]{\y_{#1}}
\newcommand{\yLU}[2]{\yL{#1}^{#2}}  
\newcommand{\Y}{\mathcal{Y}}
\newcommand{\T}{{T}}
\newcommand{\TU}[1]{\T^{#1}}
\newcommand{\TL}[1]{\T_{#1}}
\newcommand{\TLU}[2]{\TL{#1}^{#2}}
\newcommand{\MUS}{{\mathcal{P}}}
\newcommand{\LagPhi}{{\varphi}}
\newcommand{\LagPhiL}[1]{\LagPhi_{#1}}
\newcommand{\Weight}{{\lambda}}
\newcommand{\WeightL}[1]{\Weight_{#1}}
\newcommand{\Energy}{{I}}
\newcommand{\EnergyL}[1]{\Energy_{#1}}
\newcommand{\MU}{{\mu}}
\newcommand{\MUL}[1]{\MU_{#1}}
\newcommand{\MUNL}[1]{{\bm{\MU}_{#1}}}
\newcommand{\NU}{{\nu}}
\newcommand{\NUU}[1]{\NU^{#1}}
\newcommand{\NUL}[1]{\NU_{#1}}
\newcommand{\NUUL}[2]{\NUU{#1}_{#2}}
\newcommand{\NULU}[2]{\NUL{#1}^{#2}}
\newcommand{\IdxCentroid}{{k}}
\newcommand{\IdxCentroidSecond}{{\ell}}
\newcommand{\K}{{K}}
\newcommand{\N}{{N}}
\newcommand\eqdef{\stackrel{\mathclap{\normalfont\mbox{\normalfont\tiny def}}}{=}}
\newcommand{\ignore}[1]{}
\DeclareMathOperator*{\argmin}{arg\,min}
\newtheorem{proposition}{Proposition}
\newtheorem{corollary}{Corollary}
\icmltitlerunning{Variational Wasserstein Barycenters}
\begin{document}

\twocolumn[
\icmltitle{\ignore{Geometric Clustering via Variational Wasserstein Barycenters\\ . \\}
Variational Wasserstein Barycenters for Geometric Clustering}



\icmlsetsymbol{equal}{*}

\begin{icmlauthorlist}
\icmlauthor{Liang Mi}{asu}
\end{icmlauthorlist}

\icmlaffiliation{asu}{Arizona State University, USA}

\icmlcorrespondingauthor{Liang Mi}{liangmi@asu.edu}

\icmlkeywords{Machine Learning, ICML, Optimal Transport, Wasserstein Barycenter, Clustering, K-means, Variational Method}

\vskip 0.3in
]



\printAffiliationsAndNotice{}  

\begin{abstract}
We propose to compute Wasserstein barycenters (WBs) by solving for Monge maps with variational principle. We discuss the metric properties of WBs and explore their connections, especially the connections of Monge WBs, to K-means clustering and co-clustering. We also discuss the feasibility of Monge WBs on unbalanced measures and spherical domains. We propose two new problems -- regularized K-means and Wasserstein barycenter compression. We demonstrate the use of VWBs in solving these clustering-related problems.
\end{abstract}

\section{Introduction}
\label{sec:intro}
Clustering distributional data according to their spatial similarities has been a core issue in machine learning. \ignore{K-means~\cite{lloyd1982least,forgy1965cluster} and Gaussian mixture modeling (GMM)~\cite{} are two of the most widely adopted formulations for a clustering problem.}Numerous theories and algorithms for clustering problems have been developed to help understand the structure of the data and to discover homogeneous groups\ignore{ and subgroups} in their embedding spaces\ignore{, usually the Euclidean space}. Clustering algorithms also apply to unsupervised learning problems that pass information from known centroids to unknown empirical samples\ignore{~\cite{}}. Occasionally, researchers regard clustering as finding the optimal semi-discrete correspondence between distributional data or vice versa.\ignore{, e.g.~\cite{mi2018regularized}.}

Optimal transportation (OT) techniques have gained increasing popularity in the past two decades for measuring the distance between distributional data as well as aligning them together. Rooted in the OT theories, several OT-based clustering algorithms have emerged in recent years as alternatives, thanks to their efficiency and robustness. In these works, the researchers discovered the connections between different clustering problems and the OT problem through the Wasserstein barycenter (WB) formulation which computes a ``mean'' of one or multiple distributions. However, most of them deliver the results as \textit{soft assignments} that need to be further discretized.

In this paper, we propose to compute the Wasserstein barycenter based on Monge OT and explore its natural connections to different clustering problems that prefer \textit{hard assignments}. We base our OT solver on variational principles and coin our method as variational Wasserstein barycenters. We study the metric properties of WBs and use them to explain and solve different clustering-related problems such as regularized K-means clustering, co-clustering, and vector quantization and compression. We also show its immunity to unbalanced measures and its extension to measures on spherical domains. We discuss our method from different angles through comparison with other barycenter methods. We show the advantages of Monge OT-based barycenters in solving geometric clustering problems. We are among the first few that compute Monge barycenters and discover its connections to clustering problems.

\section{Related Work and Our Contributions} \label{sec:related}

\ignore{Clustering analysis dates back to 1932 by *** for studying cultural relationships.}Computational clustering algorithms date back to~\cite{lloyd1982least,forgy1965cluster} for solving K-means problems. From then, researchers have proposed different formulations and algorithms such as spectral clustering\ignore{~\cite{ng2002spectral}} and density-based clustering\ignore{~\cite{ester1996density}}. Mixture modeling, especially Gaussian mixture modeling, is also considered to be a robust solution to clustering problems\ignore{~\cite{mclachlan1988mixture}}. Hierarchical clustering\ignore{~\cite{}} and co-clustering\ignore{~\cite{}} also attracted much attention in the machine learning community. \cite{xu2005survey} surveys some classic clustering algorithms. \ignore{Some typical work include ****.} The term ``\textit{geometric clustering}'' appeared in the early literature, such as \cite{murtagh1983survey,quigley2000fade}, referring to clustering samples into subspaces according to their location in the metric space, usually the Euclidean space. In~\cite{applegate2011unsupervised}, the authors discuss the connection between K-means and another famous problem -- the OT distance, or the Wasserstein distance. 

The transportation problem has attracted many mathematicians since its very birth. Monge first raised the problem~\cite{monge1781memoire} as finding a measure-preserving map between probability measures; Kantorovich extended the problem to finding a joint probability measure~\cite{kantorovich1942translocation}; Brenier further connected the OT problem to fluid dynamics and convex geometry~\cite{brenier1991polar}. It's early applications include comparing 1D histograms for image retrieval~\cite{rubner2000earth}. Thanks to efficient OT solvers, e.g., \cite{cuturi2013sinkhorn}, \ignore{that scale to millions of samples in high-dimensional spaces, }OT has become a popular tool in machine learning with which we compare distributional data.

Meanwhile, by regarding the OT distance as a metric, we can interpolate in the space of probability measure. \cite{mccann1997convexity} laid the foundation; \cite{agueh2011barycenters} developed the problem into a general scenario and coined the term ``\textit{Wasserstein barycenters}''. \cite{cuturi2014fast,ho2017multilevel,mi2018variational} relate WBs to K-means like clustering problems and \cite{leclaire2019fast,lee2019hierarchical} explored the use of OT for hierarchical clustering. \cite{claici2018stochastic} is among the latest work on scalable semi-discrete Wasserstein barycenters. Most of them follow Kantorovich's static OT; few of them follow Monge's, or Brenier's, dynamic version that regards OT as a gradient flow in the probability space.

Compared to previous work, our contribution is three-fold: \ignore{1) we provide an alternative proofs for connecting Brenier's OT to Kantorovich's OT; }1) We derive the WB based on Monge's OT formulation and explore its connections to different clustering problems; 2) We prove the metric properties of our WB and propose it as a metric for evaluating multi-marginal clustering algorithms; 3) We explore the advantages and disadvantages of Monge WB through empirical comparison with other methods.

\section{Primer on Optimal Transportation}
\label{sec:primer}
We begin by iterating key concepts of optimal transportation (OT), variational OT, and Wasserstein barycenters (WBs). Suppose $\MU, \NU$ are \textit{Borel probability distributions} supported in \textit{Polish spaces} $\X(\x)$, $\Y(\y)$, respectively. Let $\MUS(\X \times \Y)$ be the set of all probability distributions on $\X \times \Y$.
Then, we denote by $\PS(\MU,\NU) = \{\PI \in \MUS(\X \times \Y)\ |\ \int_{\X}d\PI(\x, \y) = d\NU(\y), \int_{\Y}d\PI(\x, \y) = d\MU(\x) \}$ the set of all transportation maps $\PI$ between $\MU$ and $\NU$. 
Thus, $\PI$ is also the joint distribution of $\MU$ and $\NU$ and $d\PI(\x, \y)$ specifies the \textit{mass} transported across $\x$ and $\y$. 
In addition, we use $\Dist(\x, \y): \X \times \Y \rightarrow \mathbb{R}^{\geq 0}$ to specify the transportation cost between $\x$ and $\y$.

\subsection{Optimal Transportation}
\label{sec:ot}
The OT problem is to minimize the total transportation cost:
%
\begin{equation}
    \min_{\PI \in \PS(\MU,\NU)} \EnergyL{1}[\PI] = \int_{\X \times \Y} \Dist(\x, \y)^p d\PI(\x, \y), \nonumber
\end{equation}
%
where $p \in [1, \infty)$ indicates the moment of the cost function. Then, we call this minimum the \textit{p-Wasserstein distance}: 
%
\begin{equation}
    \WL{p} = \underset{\PI \in \PS(\MU,\NU)}{\inf} \left(\EnergyL{1}[\PI]\right)^{1/p}. \nonumber
\end{equation}
%
The above is the well-known Kantorovich's OT formulation that allows a partial map that splits the measure $d\MU(\x)$ during transportation. In Monge's original version, each location $\x$ has a unique correspondence $\y$. If we define such a map as $\T: \X \rightarrow \Y$, then we have $d\PIL{\T}(\x, \y) \equiv d\MU(x)\delta[\y = \T(\x)]$ and Monge OT:
%
\begin{equation} \label{eq:monge}
   \TU{*} = \argmin_{\PI_{\T} \in \PS(\MU,\NU)} \EnergyL{1}[\PI_{\T}] \equiv \int_{\X} \Dist(\x, \T(\x))^p d\MU(\x)
\end{equation}
%
$\T$ \textit{pushes forward} $\MU$ to $\NU$, i.e. $\NU = \T\#\MU$; more rigorously, for any measurable set $B \subset \Y,\ \NU[B] = \MU[\TU{-1}(B)]$. We direct readers to \cite{villani2003topics, peyre2019computational} for more on OT. In this paper, we focus on Monge OT. In particular, we narrow our discussion to $\X, \Y \subseteq \mathbb{R}^{n}$, $c(\x, \y)= \|\x - \y\|_{2}$, and $p = 2$ unless specified otherwise. Hence, we compute $\WL{2}$.

\subsection{Variational Optimal Transportation}
\label{sec:vot}
Directly computing a Monge map is highly intractable and variational methods have been adopted by many researchers. \cite{de2012blue, gu2013variational,levy2015numerical} offer three variational formulations. We follow~\cite{gu2013variational} and in this paper refer to it as \textit{variational OT} or VOT.

Suppose $\NU$ is supported on $\K$ discrete atoms $\bm{\y} = \{\yL{\IdxCentroid}\}_{\IdxCentroid=1}^{\K} \subset \Y$. The problem becomes \textit{semi-discrete OT}. VOT starts with a piece-wise linear function $\theta_{\bm{\h}}(\x) = \underset{\IdxCentroid}{\max} \{\x \yL{\IdxCentroid} + \hL{\IdxCentroid}\}$. Each $\yL{\IdxCentroid}$ associates with a \textit{height} $\hL{\IdxCentroid}$. The gradient, $\nabla\theta_{\bm{\h}}(x) = \yL{\IdxCentroid}$ where $\IdxCentroid$ induces the maximum, serves as a map from $\X$ to $\Y$. It induces a graph: $ \bm{\Region}_{\bm{\h}} = \bigcup\limits_{\IdxCentroid=1}^{\K} \left(\RegionL{\bm{\h}}\right)_{\IdxCentroid}, \left(\RegionL{\bm{\h}}\right)_{\IdxCentroid} \eqdef \{\x \in \X\ |\ \x\yL{\IdxCentroid} + \hL{\IdxCentroid} \geq \x\yL{\IdxCentroidSecond} + \hL{\IdxCentroidSecond}, \forall \IdxCentroidSecond \neq \IdxCentroid\}$. For simplicity, we remove $\bm{\h}$ and use $\RegionL{\IdxCentroid}$ instead. We introduce an energy:
\begin{equation}\label{eq:vot}
        \ignore{\min_{\bm{\h}}}\ \EnergyL{2}[\bm{\h}] \eqdef \int_{\bm{0}}^{\bm{\h}} \sum_{\IdxCentroid=1}^{\K} \int_{\RegionL{\IdxCentroid}} d\MU(\x) d\hL{\IdxCentroid} - \sum_{\IdxCentroid = 1}^{\K}\NU(\yL{\IdxCentroid})\hL{\IdxCentroid},
\end{equation}
whose gradient, $\big\{\int_{\RegionL{\IdxCentroid}} d\MU(\x) - \NU(\yL{\IdxCentroid})\big\}_{\IdxCentroid}$, also integrates to 
\begin{equation}\label{eq:theta}
    \ignore{\min_{\bm{\h}}}\ \EnergyL{3}[\bm{\h}] \eqdef \int_{\X} \theta_{\bm{\h}}(\x) d\MU(\x) - \sum_{\IdxCentroid = 1}^{\K}\NU(\yL{\IdxCentroid})\hL{\IdxCentroid}.
\end{equation}
Meanwhile, the \textit{Lagrangian duality} of Monge OT \eqref{eq:monge} is 
\begin{equation} \label{eq:monge_dual}
\begin{gathered}
    \max_{\bm{\LagPhi}}\ \min_{\T}\ \EnergyL{4}[\bm{\LagPhi},T] 
    \eqdef \\ \int_{\X} \big( \|\x - \T(\x)\|_{2}^{2} + \sum_{\IdxCentroid=1}^{\K} \LagPhiL{\IdxCentroid} \big) d\MU(x)  - \sum_{\IdxCentroid = 1}^{\K} \LagPhiL{\IdxCentroid}\NU(\yL{\IdxCentroid}),
\end{gathered}
\end{equation}
where $\bm{\LagPhi} = \{\varphi_\IdxCentroid\}_{\IdxCentroid=1}^{\K}$. \eqref{eq:monge_dual} simplifies to 
\begin{equation}\label{eq:monge_ot_dual}
\begin{gathered}
    \max_{\bm{\LagPhi}}\ \EnergyL{4}[\bm{\LagPhi}] \\
    = \sum_{\IdxCentroid = 1}^{\K} \int_{\RegionL{\IdxCentroid}'} \big(\|\x - \yL{\IdxCentroid}\|_{2}^{2} + \LagPhiL{\IdxCentroid} \big) d\MU(\x) - \sum_{\IdxCentroid = 1}^{\K} \LagPhiL{\IdxCentroid}\NU(\yL{\IdxCentroid}),
\end{gathered}
\end{equation}
$\RegionL{\IdxCentroid}' = \{\x \in \X\ |\ \|\x - \yL{\IdxCentroid}\|_{2}^{2} + \LagPhiL{\IdxCentroid} \leq \|\x - \yL{\IdxCentroidSecond}\|_{2}^{2} + \LagPhiL{\IdxCentroidSecond}, \forall \IdxCentroidSecond \neq \IdxCentroid\}$ which coincides with a \textit{power Voronoi diagram}. 

We provide detailed derivation for above formulas in Appendix and then prove their following connections.
\begin{proposition} \label{th:energy_connection}
    \textbf{1}. The minimum point of $\EnergyL{2}[\bm{\h}]$, \eqref{eq:vot}, also minimizes $\EnergyL{3}[\bm{\h}]$, \eqref{eq:theta}. \textbf{2}. $\RegionL{\IdxCentroid} \equiv \RegionL{\IdxCentroid}'$. \textbf{3}. $\bm{\Region}$ in $\EnergyL{2}[\bm{\h}]$, \eqref{eq:vot}, induces the Monge map $\T: x \rightarrow \yL{\IdxCentroid}$. \textbf{4}. Minimizing $\EnergyL{2}[\bm{\h}]$, \eqref{eq:vot}, is equivalent to maximizing $\EnergyL{4}[\bm{\h}]$, \eqref{eq:monge_ot_dual}. 
\end{proposition}
Therefore, we ``variationally'' minimize $\EnergyL{2}[\bm{\h}]$, \eqref{eq:vot}, for a \textit{height vector} $\bm{\h}$ and that will produce a Monge map $\TU{*}$.

\subsection{Wasserstein Barycenters}
\label{sec:wb}
The Wasserstein distance (WD) satisfies all metric properties. The \textit{fr\'{e}chet mean} of a collection of distributions $\MUNL{1:N} \eqdef \{\MUL{i}\}_{i=1}^{N}$ w.r.t the WD is called the \textit{Wasserstein barycenter} (WB). It is the minimizer of the weighted average: 
%
\begin{equation} \label{eq:wb}
    \NU = \underset{\NU \in \MUS(\Y)}{\arg\min} \sum_{i = 1}^{\N} \lambda_{i} \WUL{2}{2}(\MUL{i}, \NU),
\end{equation}
%
for $\WeightL{i} \in [0, 1]$ and $\sum_i \WeightL{i} = 1$. We simplify \eqref{eq:wb} by assuming uniform weights and rewrite it as
%
\begin{equation} \label{eq:wb2}
    \NU = \underset{\NU \in \MUS(\Y)}{\arg\min} \frac{1}{\N} \sum_{i = 1}^{\N} \int_{\XL{i}}\|\x - \TLU{i}{*}(\x)\|_{2}^{2}d\MUL{i}(\x),
\end{equation}
%
$s.t.\ \TLU{i}{*}\#\MUL{i} = \NU$ OT for all $i$. Suppose the barycenter $\NU$ is supported on $\K$ discrete atoms $\bm{\y} = \{\yL{\IdxCentroid}\}_{\IdxCentroid=1}^{\K}$. If we fix $\NU(\yL{\IdxCentroid})$ and only allow updating $\bm{\y}$, then readers can notice that \eqref{eq:wb2} is simultaneously solving $\N$ \textit{constrained K-means} problems\ignore{~\cite{cuturi2014fast}} using the same set of centroids with fixed capacity, $\bm{\NU} = \{\NU(\yL{\IdxCentroid})\}_{\IdxCentroid=1}^{\K}$. $\TLU{i}{*}$ serves as the optimal \textit{assignment function} in each K-means problem. Note that $\TLU{i}{*}(\x)$ is a \textit{hard assignment} that has only one target because we solve Monge OT. 

To clarify notation, we use $\NU$ to denote the probability distribution whether continuous or discrete. If it is discrete, namely a collection of Dirac measures, then we use $\bm{y}$ and $\bm{\NU}$ to denote its supports and measures. $\yL{\IdxCentroid}$ and $\NUL{\yL{\IdxCentroid}}$ specify the location and measure of the $\IdxCentroid$th Dirac measure.

\section{Variational Wasserstein Barycenters}
\label{sec:vwb}
Solving the WB problem relies on alternatively solving $\N$ OT problems and updating the barycenter, $\NU$. Eventually, $\NU$ minimizes the average WD between empirical distributions and the barycenter. A discrete distribution $\NU$ consists of support and measure $(\bm{\y}, \bm{\NU}) = \{(\yL{\IdxCentroid}, \NUL{\IdxCentroid})\}_{\IdxCentroid=1}^{\K}$. Updating both of them, e.g., \cite{ye2017fast}, is difficult and even troublesome in some cases (see Appendix). In this paper, we follow~\cite{cuturi2014fast} and only update one of them while fixing the other throughout the optimization.

\subsection{Discrete Barycenters via VOT}
\label{sec:wbvot}
We first solve $N$ VOT problems~\eqref{eq:vot}:
%
\begin{equation}\label{eq:vwb}
\begin{gathered}
    \min_{\{\bm{\hL{i}}\}_{i=1}^{\N}} \EnergyL{5}[\{\bm{\hL{i}}\}] \\
    \eqdef \frac{1}{\N} \sum_{i=1}^{\N} \left( \int_{\bm{0}}^{\bm{\hL{i}}} \sum_{\IdxCentroid=1}^{\K} \int_{\RegionL{i, \IdxCentroid}} d\MUL{i}(x) d\hL{i, \IdxCentroid} - \sum_{\IdxCentroid = 1}^{\K}\NUL{\IdxCentroid}\hL{i, \IdxCentroid} \right)\\ \nonumber
\end{gathered}
\end{equation}
%
Its derivative w.r.t. the VOT optimizer $\hL{i, \IdxCentroid}$ is
%
\begin{equation}\label{eq:vwb_dh}
    \nabla \EnergyL{5}[\bm{\hL{i}}] = \left\{\pdv{\EnergyL{5}}{\hL{i, \IdxCentroid}} = \int_{\RegionL{i, \IdxCentroid}} d\MUL{i}(\x) - \NUL{\IdxCentroid}\right\}_{\IdxCentroid = 1}^{\K},
\end{equation}
%
which, in practice, can be replaced by its stochastic version,
%
\begin{equation}\label{eq:vwb_sdh}
    \pdv{\EnergyL{5}}{\hL{i, \IdxCentroid}} \approx \sum_{\x \in \RegionL{i, \IdxCentroid}} \MUL{i}(\x) - \NUL{\IdxCentroid}, \nonumber
\end{equation}
%
where $x$'s are now Monte Carlo samples. Then, we can naturally adopt the gradient descent (GD) update:
%
\begin{equation}\label{eq:vwb_gd}
    \bm{\hL{i}}^{(t+1)} = \bm{\hL{i}}^{(t)} - \eta \nabla \EnergyL{5}[\bm{\hL{i}}].
\end{equation}

For completeness, we give the second-order derivative in Appendix. Its computation, however, involves integrating over the Voronoi facets and thus is intractable in general. \ignore{An exception is when $\MUL{i}$'s are all uniform distributions.}

To solve for $\NU$, we rewrite the objective of the WB~\eqref{eq:wb2} as
%
\begin{equation} \label{eq:wb3}
\begin{split}
    \min_{\NU \in \MUS(\Y)} \EnergyL{6}[\NU]
    &\eqdef \frac{1}{\N} \sum_{i = 1}^{\N} \int_{\XL{i}}\|\x - \TLU{i}{*}(\x)\|_{2}^{2}d\MUL{i}(\x) \\
    & = \frac{1}{\N} \sum_{i = 1}^{\N} \sum_{\IdxCentroid = 1}^{\K} \int_{\RegionL{i, \IdxCentroid}}\|\x - \yL{\IdxCentroid}\|_{2}^{2}d\MUL{i}(\x),
\end{split}
\end{equation}
%
$s.t.\ \yL{\IdxCentroid} = \TLU{i}{*}(x)\ \forall \x \in \XL{i}$. The critical point of this quadratic energy w.r.t. each $\yL{\IdxCentroid}$ has a closed form:
%
\begin{equation} \label{eq:vwb_dy}
\begin{split}
    \yLU{\IdxCentroid}{*}
    = \frac{\sum_{i = 1}^{\N} \int_{\RegionL{i, \IdxCentroid}} \x d\MUL{i}(\x)}{N\sum_{i = 1}^{N} \int_{\RegionL{i, \IdxCentroid}} d\MUL{i}(\x)} 
    \approx \frac{\sum_{i = 1}^{\N} \sum_{\x \in \RegionL{i, \IdxCentroid}} x \MUL{i}(\x)}{N\sum_{i = 1}^{\N} \sum_{\x \in \RegionL{i, \IdxCentroid}} \MUL{i}(\x)}, \nonumber
\end{split}
\end{equation}
%
which is the center of mass of its correspondence across all measures. The latter expression is the ``stochastic'' version.

\begin{figure}[t]
\begin{center}
\centerline{\includegraphics[width=0.9\columnwidth]{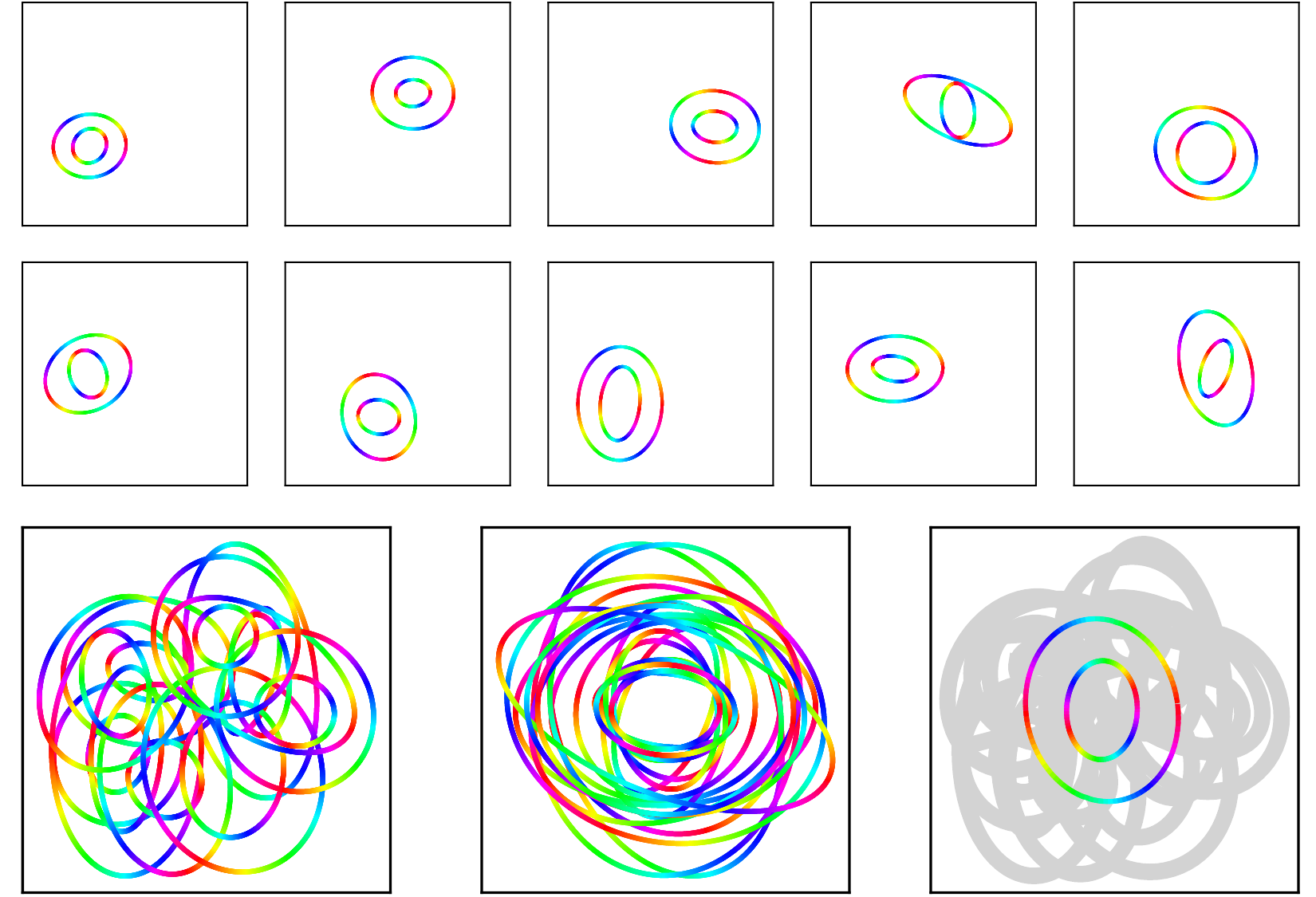}}
\caption{Ten random nested ellipses (top) averaged according to the Euclidean distance (left) and the Wasserstein distance (right) as implemented by VWB. For a better visual, we use the Euclidean sum instead. Middle is the Euclidean sum after re-centered. The VWB preserves the topology (rainbow colors) of the ellipses.}
\label{fig:vwb}
\end{center}
\end{figure}

The last step is to derive the update rule for the measure $\bm{\NU}$. \eqref{eq:wb3} is not differentiable w.r.t. $\bm{\NU}$. Still, we follow \cite{cuturi2014fast, mi2018regularized} and give the critical point and include the derivation in Appendix.
%
\begin{equation} \label{eq:vwb_dv}
\begin{split}
    \NUUL{*}{\IdxCentroid}
    = \frac{1}{N}\sum_{i = 1}^{N} \int_{\RegionLU{i, \IdxCentroid}{*}} d\MUL{i}(\x)
    \approx \frac{1}{\N} \sum_{i = 1}^{\N} \sum_{ \x \in \RegionLU{i, \IdxCentroid}{*}} \MUL{i}(\x), \nonumber
\end{split}
\end{equation}
%
where $\RegionLU{i, \IdxCentroid}{*} = \{\x\in\XL{i}\ |\ \|\x-\yL{\IdxCentroid}\|_{2}^{2} < \|\x-\yL{\IdxCentroidSecond}\|_{2}^{2}\ \forall \IdxCentroidSecond \neq \IdxCentroid\}$. $\NULU{\IdxCentroid}{*}$ coincides with the result of Lloyd's K-means algorithm in which the measure on each centroid accumulates all its assigned empirical measures.

Now that we have derived the rules for updating $\T$ and $\NU$, we summarize our algorithm for computing the VWB of a collection of measures $\{\MUL{i}\}_{i}$ in Appendix. As for the initial guess of the barycenter, if not specified, we can either run Lloyd's algorithm on all the measures as a whole and adopt the resulting $\K$ centroids or uniformly sample the space $\Y$. The choice of the measure on the centroids depends on the specific application. A ubiquitous choice is uniform Dirac measures, i.e. $\NUL{\IdxCentroid} = \frac{1}{\K} \delta[\yL{\IdxCentroid}]$. Figure~\ref{fig:vwb} suggests that by regarding the WD as the metric, we can find a mean shape on the same manifold, if there exists one.

Our method does converge since we follow coordinate descent and every step is convex \cite{grippo2000convergence}, given the assumption we made in~\ref{sec:ot} that $\X, \Y \subset \mathbb{R}^{n}$, $c(\x, \y)= \|\x - \y\|_{2}$, and $p = 2$. There are in total $\mathcal{O}(\K \cdot \N)$ variables for computing $N$ Monge maps $\{\TL{i}\}_{i=1}^{\N}$, and $\mathcal{O}(\K)$ variables as the support $\bm{\y}$ and $\mathcal{O}(\K)$ variables as the measure $\bm{\NU}$. We implemented VWB with PyTorch~\cite{paszke2019pytorch}. The code to reproduce the figures in this paper is at \href{https://github.com/icemiliang/pyvot}{https://github.com/icemiliang/pyvot}.\ignore{  Figure~\ref{fig} visualizes the gradient flow.}

\subsection{Metric Properties of (V)WBs}
\label{sec:metric_property}
In spite of extensive studies on metric properties of OT over the past century, the metric properties of Wasserstein barycenters have yet been fully explored. Some pioneer work includes~\cite{papadakis2019approximation,auricchio2018computing}. \ignore{ discussed a special case for two marginals, namely \textit{2-metric}.}

However, most of them focus on the barycenter of two measures ($\N=2$). We show in the following that WBs in general ($\N \geq 2$) induce a \textit{generalized metric} (\textit{n-metric}). First, let us define the total Wasserstein distance between the barycenter and all the marginal Borel measures:
%
\begin{equation} \label{eq:wb_nmetric}
    \WBL{\NU}(\MUNL{1:N}) \eqdef \underset{\NU \in \MUS(\Y)}{\inf} \frac{1}{N}\sum_{i = 1}^{N} \WUL{2}{2}(\MUL{i}, \NU),
\end{equation}
%
Then, we raise the following two propositions and prove them in Appendix.
\begin{proposition}\label{the:wb_nmetric}
$\WBL{\NU}(\MUNL{1:N})$ defines a generalized metric among $\{\MUL{i}\}_{i=1}^{\N}$, $N \geq 2$. Specifically, $\WBL{\NU}(\MUNL{1:N})$ satisfies the following properties. \\
1) Non-negativity: $\WBL{\NU}(\MUNL{1:N}) \geq 0$.\\
2) Symmetry: $\WBL{\sigma_1(1:\N)}(\NU) = \WBL{\sigma_2(1:\N)}(\NU)$, where $\sigma_1(1:N)$ and $\sigma_2(1:\N)$ are different permutations of the set ${1:\N}$.\\
3) Identity: $\WBL{\NU}(\MUNL{1:N}) = 0 \Longleftrightarrow \MUL{i} = \MUL{j}, \forall i \neq j$.\\
4) Triangle inequality: $\WBL{\NU}(\MUNL{1:N}) \leq \sum_{i=1}^{\N} \WBL{\NU}(\MUNL{1:\N+1 \backslash i})$.
\end{proposition}
\begin{proposition}\label{the:wb_nmetric_bigon}
The bound of the triangle inequality in Proposition~\ref{the:wb_nmetric} can be tightened by a linear factor. Specifically, we have $(N/2)\ \WBL{\NU}(\MUNL{1:N}) \leq \sum_{i=1}^{\N} \WBL{\NU}(\MUNL{1:\N+1 \backslash i})$.
\end{proposition}

The VWB $\NU = \sum_{\IdxCentroid=1}^{\K}\NUL{j}\delta[\yL{\IdxCentroid}] \in \MUS(\Y)$, as a special case of WBs, certainly inherits the metric properties because there is not such a restriction on the continuity of $\Y$. If we denote the total WD for the VWB with $\VWBL{\NU}(\MUNL{1:\N})$
%
%
, then we have:
\begin{corollary}\label{the:vwb_nmetric}
\ignore{As a special case of Proposition~\ref{the:wb_nmetric}, }$\VWBL{\NU}(\MUNL{1:\N})$ induces an n-metric over all $\MUL{i}$'s. In particular, the equal signs in 1) non-negativity and 4) inequality hold if and only if all $\MUL{i}$'s and $\NU$ have the same number of supports with positive Dirac measures $|\MUL{i}| = |\NU| = \K,\ \forall i \in \{1,...,\N\}$. 
\end{corollary}

\begin{figure}[t]
\begin{center}
\centerline{\includegraphics[width=0.9\columnwidth]{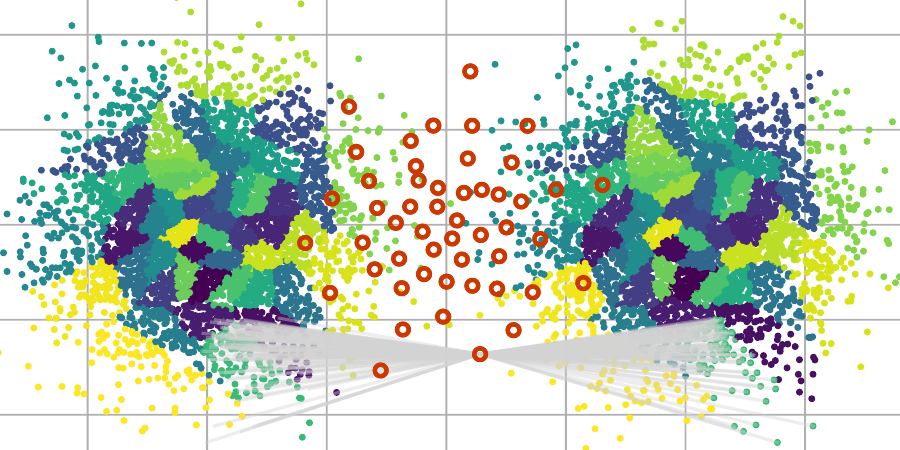}}
\caption{Transshipment: transporting measures through a set of discrete relays. Colors on the measures specify correspondences.}
\label{fig:ship1}
\end{center}
\end{figure}

\subsection{Approximate WDs with VWBs -- Transshipment}
\label{sec:transship}

We consider the transshipment problem as finding a Monge map from the source to the target that passes through a relay measure in the middle (see Figure~\ref{fig:ship1}). We solve it by VWBs. Our discussion comes directly from the conclusions in~\ref{sec:metric_property}:
\begin{corollary}\label{the:vwb_2metric}
As a special case of Corollary~\ref{the:vwb_nmetric}, $\VWBL{\NU}(\MUNL{1:2})$ induces a (2-)metric between $\MUL{1}$ and $\MUL{2}$. It is lower-bounded by $\frac{1}{4}\WLU{2}{2}(\MUL{1},\MUL{2})$ when \ignore{$\MUL{1}$, $\MUL{2}$, and $\NU$ have the same number of supports}$|\MUL{i}| = |\NU| = K$.
\end{corollary}

Appendix reveals the proof. \ignore{It is inspired by~\cite{papadakis2019approximation}. }Then, we can use a VWB to connect two measures and regard the total WD as an approximation to the true WD between them. We name it the \textit{variational Wasserstein distance}, or \textit{VWD}:
%
\begin{equation}\label{eq:vwb_ship}
\begin{split}
    \VWLU{2}{2}(\MUL{1}, \MUL{2}) &\eqdef 4 \VWBL{\NU}(\MUNL{1:2})\\ 
    & =\underset{\NU \in \MUS(\Y)}{\inf} 2\ \WUL{2}{2}(\MUL{1}, \NU) + 2\ \WUL{2}{2}(\MUL{2}, \NU). \nonumber
\end{split}
\end{equation}
%
We use the toy data above to evaluate the approximation against the number of supports, $K$. The two Gaussian measures share the same covariance matrix; their means differ by 1. Thus, the true WD is $1$. We use the results from linear programming (LP) and Sinkhorn algorithms for reference. Figure~\ref{fig:ship2} shows that VWD is still accurate with few supports. For each number of supports in the experiments, we run our algorithm 10 times with different random initial locations. We draw the error band with light color. Until $1500$ supports, ratio $0.3$, our algorithm produces stable approximations that have almost zero variance. 
\ignore{\cite{papadakis2019approximation} shows that, if we marginalize a Kantorovich map to the interpolation of the supports of the two marginals, the resulting distance equals the original WD. \ignore{Yet, \ignore{Papadakis and we approach approximating WDs by solving the optimal transshipment problem differently: }Papadakis targets a Kantorovich map; we solve for a Monge map. }We discuss a similar case for Monge maps in Appendix.}

\begin{figure}[t]
\begin{center}
\centerline{\includegraphics[width=0.9\columnwidth]{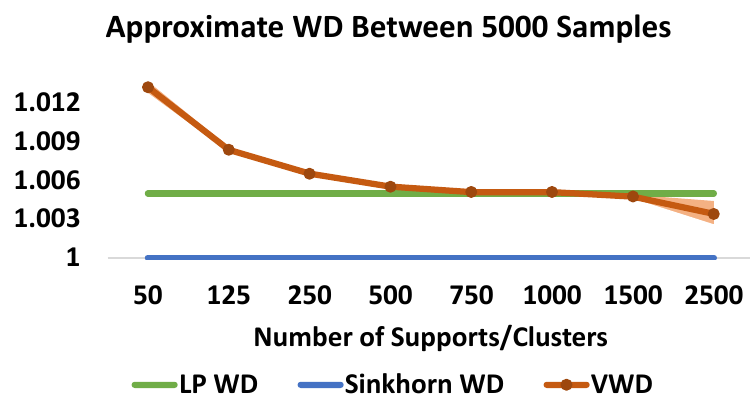}}
\caption{WDs between two Gaussian's vs. number of supports.}
\label{fig:ship2}
\end{center}
\end{figure}

\subsection{On Unbalanced Measures}
\label{sec:unbalanced_ot}

When measures are not probabilities or, more generally their integrals do not equal, we are solving \textit{unbalanced OT}. \cite{benamou2003numerical} first explored the problem. Researchers since then have offered various formulations and perspectives to approach it, e.g. \cite{liero2018optimal} adding $f$-divergences as regularizers instead of constraints on the marginals. Here, we discuss VOT and VWBs for unbalanced measures. Without loss of generality, let us assume $\int_{\X}d\MU(\x) = w,\ \sum_{\IdxCentroid = 1}^{\K} \NUL{\IdxCentroid} = 1$. We denote the mass in each power Voronoi cell by $w_{\RegionL{\IdxCentroid}} = \int_{\RegionL{\IdxCentroid}}d\MU(\x)$.
Inspired by the discussion in~\cite{peyre2019computational}, we propose to penalize the quadratic mismatch of the mass for each cell $\IdxCentroid$.
%
%
%
\begin{equation} \label{eq:unbalanced_vot}
\begin{gathered}
\min_{\bm{\Region}}\ \EnergyL{7}[\bm{\Region}] \eqdef 
\int_{\X}  \|\x - T(\x)\|^2_2  d\MU(\x) + \lambda \sum_{\IdxCentroid = 1}^{\K}  \left( w_{\IdxCentroid} - \NUL{\IdxCentroid}\right)^2,
\end{gathered}
\end{equation}
%
s.t. $\sum_{\IdxCentroid} w_{\RegionL{\IdxCentroid}} = w$. If $\lambda \rightarrow \infty, w = 1$, \eqref{eq:unbalanced_vot} reverses to~\eqref{eq:monge}. 

In the following, we discuss \eqref{eq:unbalanced_vot} in two cases: $\NUL{\IdxCentroid} = \frac{1}{\K}$ and a more general one, $\NUL{\IdxCentroid} \in (0,1),\ \sum_{\IdxCentroid = 1}^{\K} \NUL{\IdxCentroid} = 1$.

\begin{figure}[t]
\begin{center}
\centerline{\includegraphics[width=0.85\columnwidth]{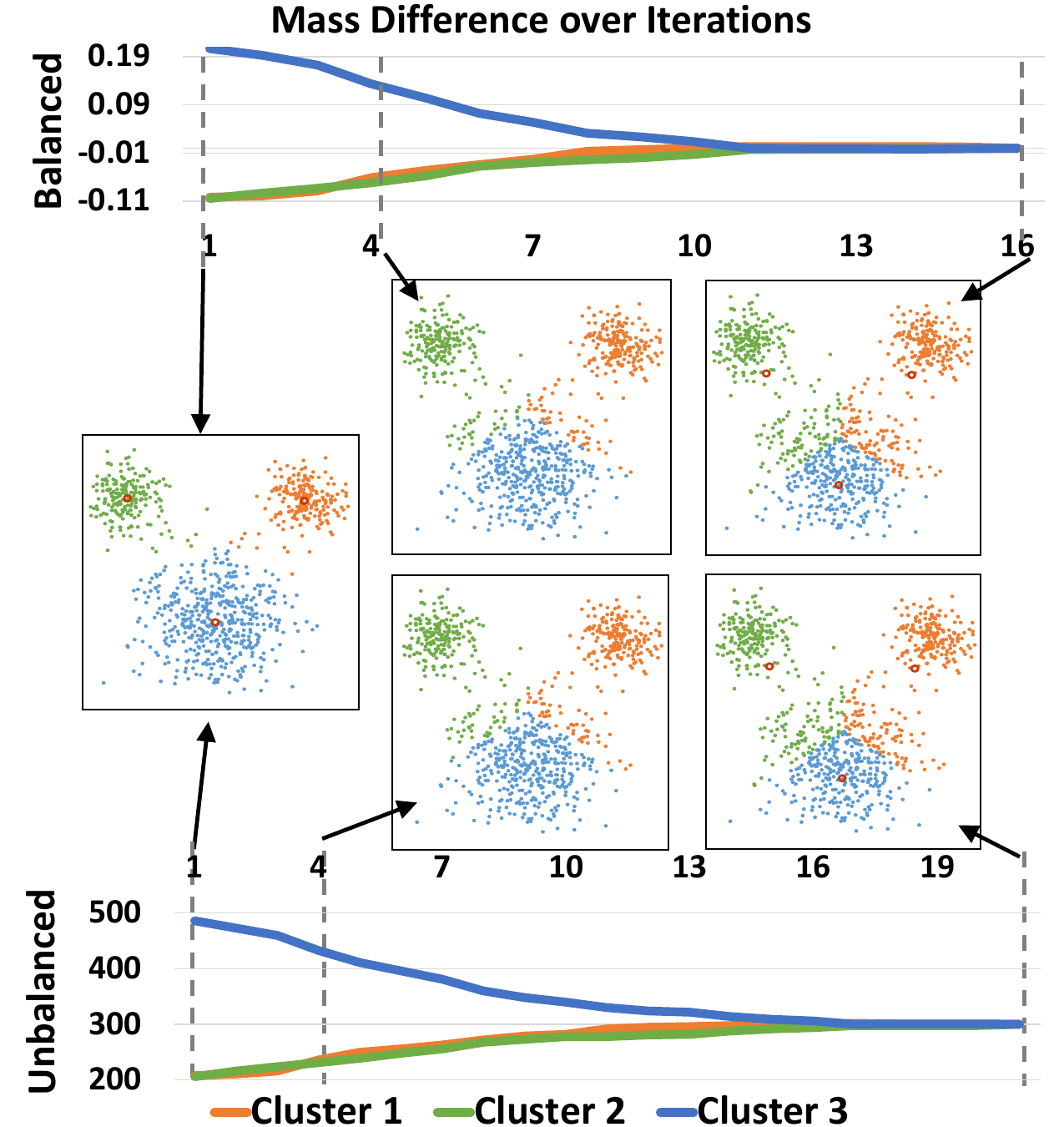}}
\caption{Mass difference over iterations for VOT on balanced and unbalanced measures. They follow the same trend and converge at almost the same rate. The resulting clusters are exactly the same.}
\label{fig:uot}
\end{center}
\end{figure}

\textbf{Case 1:} $\NUL{\IdxCentroid} = \frac{1}{\K}$. 
It is trivial to verify that minimizing the second term in~\eqref{eq:unbalanced_vot} over $\bm{\Region}$ under its equality constraint yields all $w_{\RegionL{\IdxCentroid}}$'s equal to each other, i.e. $w_{\RegionL{\IdxCentroid}} = \frac{1}{\K}w$. On the other hand, the gradient of the VOT energy~\eqref{eq:vot} has the form $\int_{\RegionL{\IdxCentroid}} d\MU(\x) - \NUL{\IdxCentroid} \equiv w_{\RegionL{\IdxCentroid}} - \frac{1}{\K}w$. The question now is whether $w_{\RegionL{\IdxCentroid}} = \frac{1}{\K}w$ minimizes~\eqref{eq:vot}. If so, then we can instead solve~\eqref{eq:vot} to minimize the second term in~\eqref{eq:unbalanced_vot}.

When $w_{\RegionL{\IdxCentroid}} = \frac{1}{\K}w$, the gradient $\nabla\EnergyL{2}[\bm{\h}] = \{w_{\RegionL{\IdxCentroid}} - \NUL{\IdxCentroid}\}_{\IdxCentroid}$ becomes constant and thus $\bm{\h}$ is being translated at a constant rate. Certainly, translation does not modify a power Voronoi diagram as specified in~\eqref{eq:monge_ot_dual}. Therefore, $\EnergyL{2}[\bm{\h}]$ saturates\ignore{ to the point where $w_{\RegionL{\IdxCentroid}} = \frac{1}{\K}w$}. 
For any other partition such that $\exists\ \RegionL{\IdxCentroid}', w_{\RegionL{\IdxCentroid}'} \neq \frac{1}{\K}w$, we have
%
\begin{equation}
\begin{split}
\sum_{\IdxCentroid=1}^{\K} \int_{\RegionL{\IdxCentroid}} & \left( \|\x - \yL{\IdxCentroid}\|_{2}^{2} + \hL{\IdxCentroid} \right) d\MU(\x) \\
&\leq \sum_{\IdxCentroid=1}^{\K} \int_{\RegionL{\IdxCentroid}'} \left( \|\x - \yL{\IdxCentroid}\|_{2}^{2}  + \hL{\IdxCentroid} \right) d\MU(\x).
\nonumber
\end{split}
\end{equation}
%
Therefore, $w_{\RegionL{\IdxCentroid}} = \frac{1}{\K}w$ indeed minimizes~\eqref{eq:vot}. Meanwhile, we know that an unweighted Voronoi diagram ($\hL{\IdxCentroid} = 0$) would minimize the first term in~\eqref{eq:unbalanced_vot}. Thus, we can directly give the solution to \eqref{eq:unbalanced_vot} as $\{\frac{\lambda}{1 + \lambda}\hL{\IdxCentroid}\}_{\IdxCentroid=1}^{\K}$.

%
%

%


%

\textbf{Case 2:} $\NUL{\IdxCentroid} \in (0,1),\ \sum_{\IdxCentroid = 1}^{\K} \NUL{\IdxCentroid} = 1$.
It is also trivial to verify that minimizing the second term in~\eqref{eq:unbalanced_vot} over $\bm{\Region}$ yields $w_{\RegionL{\IdxCentroid}} = \NUL{\IdxCentroid}w$ (replace $\frac{1}{K}$ with $\NUL{\IdxCentroid}$). $w_{\RegionL{\IdxCentroid}} = \NUL{\IdxCentroid}w$ also triggers the convergence of VOT as in Case 1.

At this point, we claim that VOT, \eqref{eq:vot}, minimizes the total transportation cost regardless of the measures equal or not. We leave rigorous proofs to future work.  We illustrate the convergence in Figure~\ref{fig:uot}. The top half shows VOT between balanced measures and the bottom half shows unbalanced measures, $w = 900 = 500 + 2 \times 200, \NUL{\IdxCentroid} = \frac{1}{3}$. Note that the gradient of the VOT and VWB, \eqref{eq:vwb_dh}, correlates to the absolute measure values. Thus, we should scale the step size, $\eta$ in \eqref{eq:vwb_gd}, for each VOT according to the difference of the measure, i.e. $\eta_i / w $, assuming the total for $\NU$ is $1$. Figure~\ref{fig:uot} shows that under the same (scaled) GD step size, VOT in two cases follows the same trend.

We apply VWBs to unbalanced measures and show in Figure~\ref{fig:uvwb} the resulting barycenter of two Gaussian's of different samples, $5$k vs. $1$k. We choose $\lambda = \infty$ in \eqref{eq:unbalanced_vot}. We can also see that Monge maps are absolutely \textit{binary} and \textit{sparse}.

\begin{figure}[t]
\begin{center}
\centerline{\includegraphics[width=0.9\columnwidth]{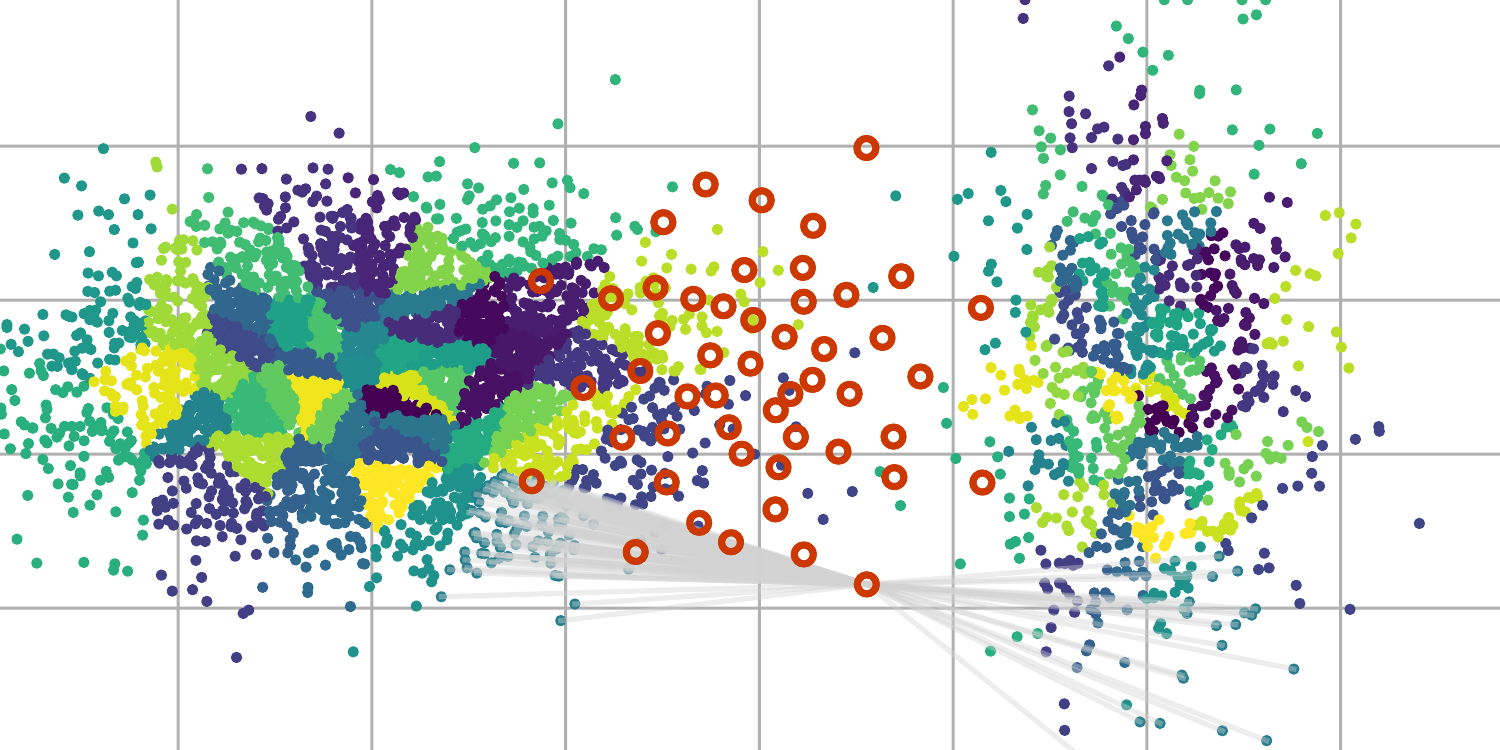}}
\caption{Interpolating two Gaussian's of different number of samples by computing the VWB results in a mean isotropic Gaussian.}
\label{fig:uvwb}
\end{center}
\end{figure}

\subsection{On Spherical Domains}
\label{sec:svwb}

Optimal transport on geometric domains other than the Euclidean domain extends its applications~\cite{solomon2015convolutional,staib2017parallel,cui2019spherical}. \cite{cui2019spherical} relates \textit{spherical power Voronoi diagram} to OT on unit spheres. Inspired by that, we study our VWB on spherical domains and its metric properties.

Let us define a new ground metric on a unit sphere, $\mathbb{S}^2 \times \mathbb{S}^2 \rightarrow \mathbb{R}^{\geq 0}$, as $\Dist(\x, \yL{\IdxCentroid}) = -\ln\langle\x , \yL{\IdxCentroid}\rangle$ and the OT distance:
\begin{equation}
    \WL{1}' = \underset{\T \in \PSL{T}(\MU,\NU)}{\inf} \EnergyL{8}[\PI] \eqdef -\int_{\mathbb{S}^2} \ln\langle x , T(x) \rangle d\MU(\x)
\end{equation}
s.t. $\int_{\mathbb{S}^2}(\psi \circ T)d\MU(x) = \int_{\mathbb{S}^2}\psi d\NU(y)$ for all non-negative $\psi$.

\begin{figure}[t]
\begin{center}
\centerline{\includegraphics[width=0.7\columnwidth]{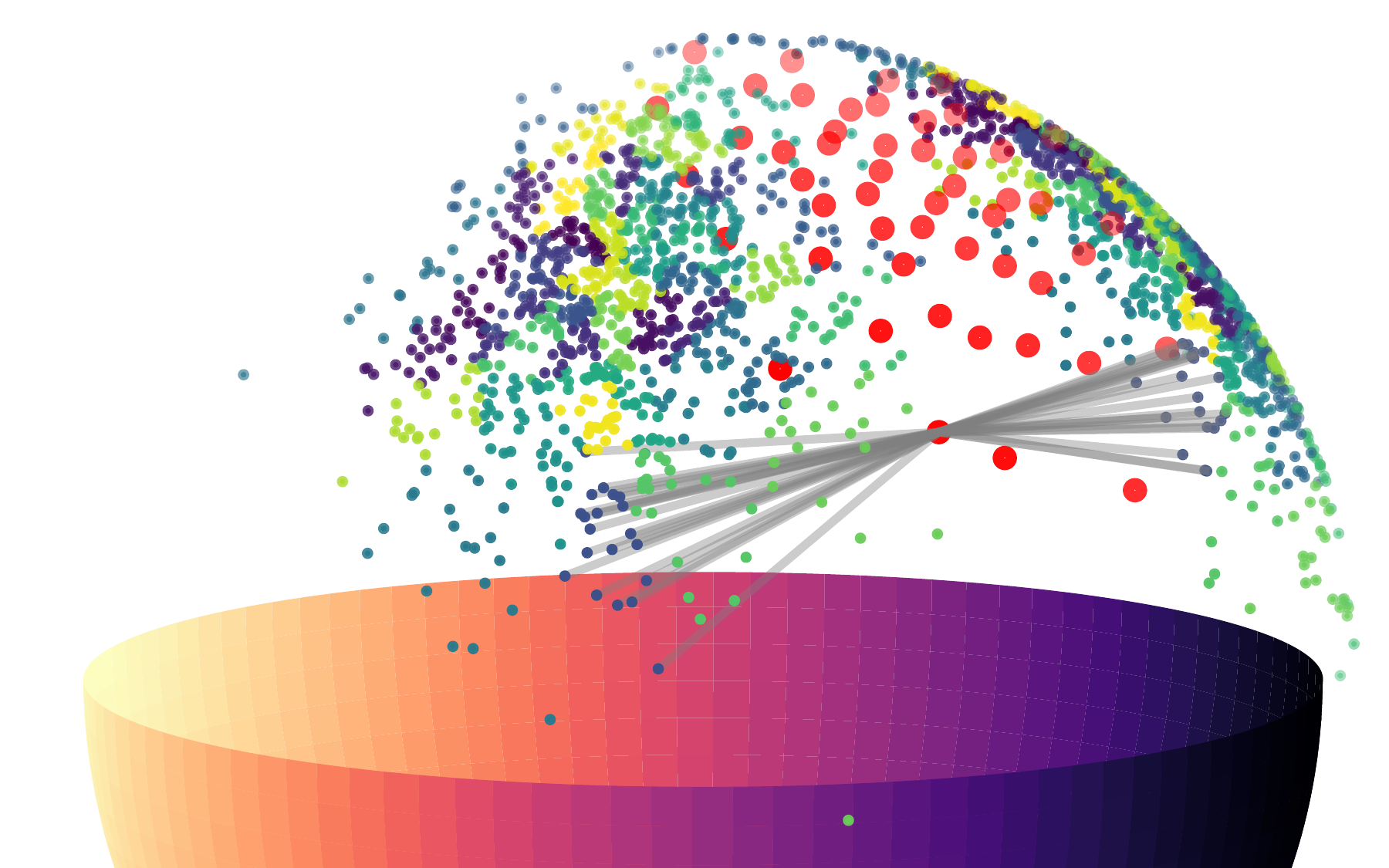}}
\caption{Interpolating two Gaussian distributions on a sphere w.r.t. the VWD. By using the VWB, we can build sparse connection between the two domains via a few discrete relays.}
\label{fig:sphere}
\end{center}
\end{figure}

Following \cite{cui2019spherical}, we define the \textit{power distance} on a sphere as $c'(\x, \yL{\IdxCentroid}) = -\ln\langle \x , \yL{\IdxCentroid} \rangle / \cos r_{\IdxCentroid}$ and thus the power Voronoi diagram in the spherical domain  $\RegionL{\IdxCentroid} \eqdef \{\x \in \mathbb{S}^{2}\ | c'(\x, \yL{\IdxCentroid}) \leq c'(\x, \yL{\IdxCentroidSecond}), \forall \IdxCentroidSecond \neq \IdxCentroid\}$. $r_{\IdxCentroid}$ is the weight of each power cell, it relates to the VOT minimizers by $\cos{r} = e^{h}$. Then, the derivation in \ref{sec:vot} gives us the Monge map.

$-\ln\langle\x , \yL{\IdxCentroid}\rangle$ does not satisfy triangle inequality but the other three metric properties. Thus, $\WL{1}'$ inherits those properties. We notice that the proof for Proposition~\ref{the:wb_nmetric} does not require triangle inequality. Therefore, the n-metric properties still hold for the barycenter w.r.t. $\WL{1}'$.
%
$$\WBL{1:\N}'(\NU) \eqdef \underset{\NU \in \MUS(\Y)}{\inf} \frac{1}{\N}\sum_{i = 1}^{\N} \WL{1}'(\MUL{i}, \NU)$$
%
%

Although $\WL{1}'$ is not a true metric, we can still find a ``mean'' of multiple marginals by alternatively minimizing the total ``distance'' as in~\ref{sec:wbvot}. Figure~\ref{fig:sphere} shows an example where the VWB simultaneously partitions two domains on the sphere. For simplicity, we draw connections with straight lines.

\section{Geometric Clustering via VWBs}
\label{sec:gcvwb}
In this section, we further connect VWBs to several clustering problems. We consider a fixed number of clusters, $K$, the quadratic Euclidean distance as the ground metric, and mainly the spatial relation between samples. We refer to this scenario as \textit{geometric clustering}. From now on, we discretize the measures: $\NU = \sum_{\IdxCentroid = 1}^{K}\NUL{\IdxCentroid}\delta[\yL{\IdxCentroid}], \MUL{i} = \sum_{j=1}^{n_i}\MU(\xL{j})\delta[\xL{j}]$ and assume that $n_i \gg K,\ \forall i$.

\subsection{Regularized K-Means Clustering}
\label{sec:regularized_kmeans}

In light of the discovery of VWBs for unbalanced measures in~\ref{sec:unbalanced_ot}, we now introduce a relaxed version of the constrained K-means problem. We call it \textit{regularized K-means}.

The classic K-means problem has the objective as follows:
%
\begin{equation} \label{eq:kmeans}
    \min_{\bm{\Region}}\sum_{\IdxCentroid = 1}^{K}\sum_{\x \in \RegionL{\IdxCentroid}}\|\x - \yL{\IdxCentroid}\|_{2}^{2},\ \ \yL{\IdxCentroid} = \frac{1}{|\RegionL{\IdxCentroid}|}\sum_{\x \in \RegionL{\IdxCentroid}} \x,
\end{equation}
%
where $|\RegionL{\IdxCentroid}|$ is the number of samples supported in $\RegionL{\IdxCentroid}$. By adding the marginal constraint $\NUL{\IdxCentroid} = \sum_{\x \in \RegionL{\IdxCentroid}}\MU(\x)$ with pre-defined, fixed measures $\{\NUL{\IdxCentroid}\}_{\IdxCentroid = 1}^{K}$, we turn~\eqref{eq:kmeans} into the \textit{constrained K-means} problem~\cite{bradley2000constrained,cuturi2014fast}, or the \textit{Wasserstein Means} problem coined in~\cite{ho2017multilevel}. As discussed in Section~\ref{sec:unbalanced_ot}, when the total measures do not equal, such constraints instead become regularizers. Then, we define the objective of the regularized K-means clustering problem as:
%
\begin{equation} \label{eq:regularized_kmeans}
    \min_{\bm{\Region}, \bm{\y}}\sum_{\IdxCentroid = 1}^{\K}\sum_{\x \in \RegionL{\IdxCentroid}}\|\x - \yL{\IdxCentroid}\|_{2}^{2} + \lambda \sum_{\IdxCentroid = 1}^{\K} \left( \NUL{\IdxCentroid} - w_{\IdxCentroid} \right)^{2},\
\end{equation}
%
where $w_{\IdxCentroid} = \sum_{\x \in \RegionL{\IdxCentroid}}\MU(\x)$. If $\lambda = 0$, \eqref{eq:regularized_kmeans} becomes K-means; if $\lambda \rightarrow \infty$, \eqref{eq:regularized_kmeans} becomes Monge OT. As practiced in~\cite{cuturi2014fast,mi2018variational}, we can alternatively solve for $\bm{\Region}$ and $\yL{\IdxCentroid} = 1/|\RegionL{\IdxCentroid}|\sum_{\x \in \RegionL{\IdxCentroid}} \x$. The energy~\eqref{eq:regularized_kmeans} will monotonically decrease and eventually converge into a cycle of one. Figure~\ref{fig:regularized_kmeans} illustrates the regularized K-means result which informally looks like an interpolation between K-means and constrained K-means.

\ignore{
\begin{corollary}
*** minimizes ~\eqref{eq:regularized_kmeans} blah blah blah blah blah blah blah blah blah blah blah blah blah blah blah blah blah
\end{corollary}}

\begin{figure}[t]
\begin{center}
\centerline{\includegraphics[width=\columnwidth]{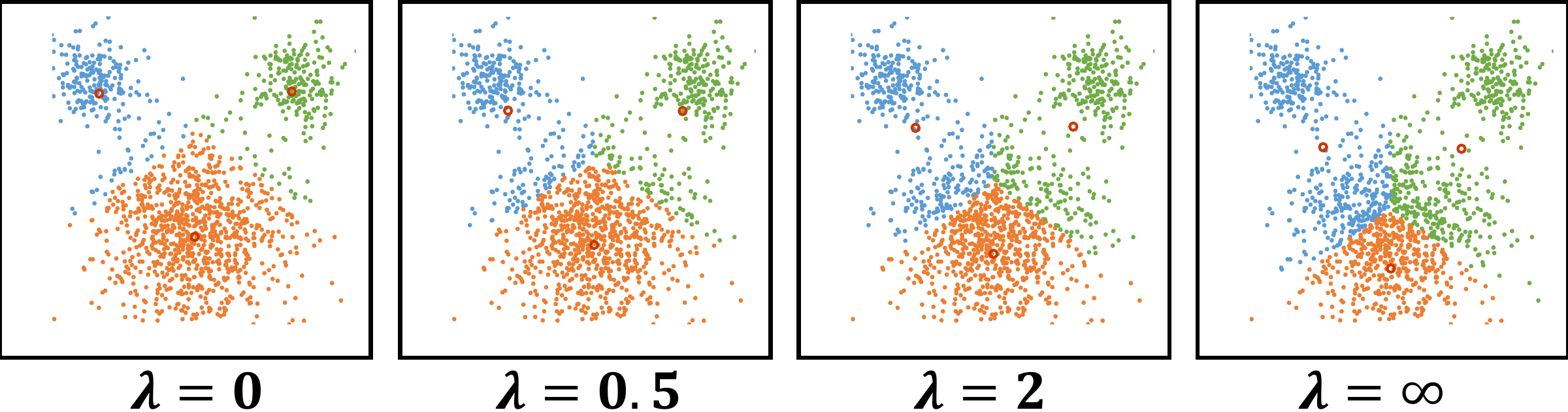}}
\caption{Results from different regularization strength $\lambda$ in \eqref{eq:regularized_kmeans}. Left is traditional K-means and right is constrained K-means.}
\label{fig:regularized_kmeans}
\end{center}
\end{figure}

\subsection{Co-clustering Spatial Features in $\mathbb{R}^{n}$}
\label{sec:co_cluster}
Extending the Wasserstein clustering procedure to multiple targets induces the co-clustering problem. In this section, we discuss the connection between co-clustering problems and VWBs. In particular, because we use quadratic Euclidean distances as the ground metric, we focus on co-clustering spatial features embedded in the Euclidean space.

Given multiple distributional data, the goal of co-clustering is to simultaneously partition each domain to 1) minimize the pairwise variance in the same cluster and 2) minimize the pairwise variance for each cluster across domains. We assume all samples reside in $\mathbb{R}^{n}$ equipped with $\|\cdot\|_{2}$, then:
%
\begin{equation}
\begin{split}
    \underset{\RRegionL{i}}{\min}\ \EnergyL{9}[\RRegionL{i}] \eqdef & \sum_{i=1}^{\N} \sum_{\IdxCentroid=1}^{\K} \frac{1}{2|\RegionL{i,\IdxCentroid}|} \sum_{\x,\x' \in \RegionL{i,\IdxCentroid}} \|\x - \x'\|_{2}^{2} \\
    + \sum_{\IdxCentroid=1}^{\K}  \sum_{1 \leq i < j \leq \N} & \frac{\lambda_{i,j,\IdxCentroid}}{|\RegionL{i,\IdxCentroid}|+|\RegionL{j,\IdxCentroid}|} \sum_{\substack{\x \in \RegionL{i,\IdxCentroid} \\ \x' \in \RegionL{j,\IdxCentroid} }} \|\x - \x'\|_{2}^{2}. \nonumber
\end{split}
\end{equation}
%
where $|\RegionL{i,\IdxCentroid}|$ is the number of samples in $\RegionL{i,\IdxCentroid}$; $\lambda_{i,j,\IdxCentroid} \in \{0, 1\}$ specifies the correspondence of the clusters across different domains. Thus, $\sum_{i} \lambda_{i,j,\IdxCentroid} = 1$ and $\sum_{j} \lambda_{i,j,\IdxCentroid} = 1$. As for K-means, we can simplify the pairwise variance with the mean of each cluster at each domain, $\alpha_{i, \IdxCentroid}$:
%
\begin{equation} \label{eq:cocluster}
\begin{split}
    \underset{\RRegionL{i}}{\min}\ \EnergyL{9}[\RRegionL{i}] \equiv & \sum_{i=1}^{\N} \sum_{\IdxCentroid=1}^{\K} \sum_{\x\in \RegionL{i,\IdxCentroid}} \|\x - \alpha_{i, \IdxCentroid}\|_{2}^{2} \\
    + \sum_{\IdxCentroid=1}^{\K} &  \sum_{i = 1}^{\N} \sum_{j \neq i} \lambda_{i,j,\IdxCentroid} \sum_{\x \in \RegionL{i,\IdxCentroid}} \|\x - \alpha_{j, \IdxCentroid}\|_{2}^{2}.
\end{split}
\end{equation}
%
where $\alpha_{i, \IdxCentroid} = \frac{1}{|\RegionL{i, \IdxCentroid}|} \sum_{\x \in \RegionL{i, \IdxCentroid}} \x$\ignore{ $\alpha_{i, \IdxCentroid} = 1 / |\RegionL{i, \IdxCentroid}| \sum_{\x \in \RegionL{i, \IdxCentroid}} \x$} is the cluster center for each cluster at each domain. The first term of \eqref{eq:cocluster} is solving $\N$ K-means problems. The second term is solving $\N(\N-1)$
K-means problems but with the cluster centroids at other domains. Thus, we can further simplify the problem into:
%
\begin{equation} \label{eq:cocluster2}
\begin{split}
    \underset{\bm{\RegionL{i}}}{\min}\ \EnergyL{9}[\bm{\RegionL{i}}] \equiv \ignore{&
    \sum_{i=1}^{\N} \sum_{j=1}^{\N} \sum_{\IdxCentroid=1}^{\K} \sum_{\x \in \RegionL{i,\IdxCentroid}} \|\x - \alpha_{j, \IdxCentroid}\|_{2}^{2}\\}
    &     \sum_{i=1}^{\N}  \sum_{\IdxCentroid=1}^{\K} \sum_{\x \in \RegionL{i,\IdxCentroid}} \sum_{j=1}^{\N} \|\x - \alpha_{j, \IdxCentroid}\|_{2}^{2}\\
\end{split}
\end{equation}

Solving \eqref{eq:cocluster2} involves alternatively updating partition $\{\bm{\RegionL{i}}\}_{i}$ and the centroid $\{\alpha_{i, \IdxCentroid}\}_{i, \IdxCentroid}$. When updating $\{\bm{\RegionL{i}}\}_{i}$ with fixed $\{\alpha_{i, \IdxCentroid}\}_{i, \IdxCentroid}$, we can rewrite \eqref{eq:cocluster2} as 
%
\begin{equation} \label{eq:cocluster3}
\begin{split}
    \EnergyL{11}[\RRegionL{i}]
    = &     \sum_{i=1}^{\N}  \sum_{\IdxCentroid=1}^{\K} \sum_{\x \in \RegionL{i,\IdxCentroid}} \left[\x - \left[\sum_{j=1}^{\N}  \alpha_{j, \IdxCentroid}\right] \right]^2 + C \\
    \eqdef &     \sum_{i=1}^{\N}  \sum_{\IdxCentroid=1}^{\K} \sum_{\x \in \RegionL{i,\IdxCentroid}} \left(\x - \hat{\alpha_{\IdxCentroid}} \right)^2 + C.
\end{split}
\end{equation}
%
$C$ is some constant. Thus, we convert co-clustering to $N$ $K$-means problems with the same set of centroids.

Then, we can naturally impose a constraint on the weights, i.e $\int_{\RegionL{i, \IdxCentroid}}d\MUL{i}(\x) = \NUL{\IdxCentroid},\ \forall i,\ \IdxCentroid$, to turn the problem into a VWB problem which is also an $N$ constrained K-means problem. Note, that it is trivial to extend it into a generalized VWB problem, by instead inserting the weighted constraint into the main objective as we did in \ref{sec:regularized_kmeans}.

\subsubsection{Regularized VWBs for Co-Clustering} \label{sec:regularized_vwb}

In addition to purely clustering feature domains according to Wasserstein losses, we can regularize the correspondences based on prior knowledge. Inspired by \cite{alvarezmelis2019towards,mi2018regularized}, we regularize the correspondence by global invariances. Directly regularizing Monge correspondences is highly intractable because Monge maps are basically binary permutations and thus not differentiable. Therefore, we instead regularize the centroid update process. 

To this end, instead of using the average of the centroids as we did in \eqref{eq:cocluster3}, we estimate the rigid transformation (isometry) between the VWB and the centroids of each domain by minimizing $\|\bm{\y} - H_{i}\bm{\alpha}_{i}\|_{2}^{2}\ \forall i$, subject to $H_i$ composing a rotation and a translation, i.e. $H_i = [R_i|t_i]$. This can be done by singular value decomposition (SVD) with minimum computational costs. After that, we average all the transformations by separately averaging rotations and translations. With the abuse of notation, we simply denote the process by $\widetilde{H} = \frac{1}{N} \sum_{i} H_i$, but as we know, we need to factorize the rotations into quaternions before averaging them. The final location for the supports $\bm{\y}$ is given by $\widetilde{\bm{\y}} = \widetilde{H}\bm{\y}$.

\subsection{Vector Quantization and Data Compression} \label{sec:compress}

Lloyd's K-means algorithm was initially proposed for vector quantization and has been a fundamental choice for data compression. It centers at using fewer samples to approximate the entire distribution. In light of the connection between VWBs and K-means, we raise the problem of compressing multiple distributional data as a whole with Wasserstein barycenters and propose the VWB as a natural choice. It shares the same objective as the WB. Intuitively, we use sum of WDs to measure the compression error.

By using VOT, we obtain a surjection from each domain to the barycenter. Because we optimize over the height vector $\hhL{i}$ \eqref{eq:vwb_gd}, given empirical samples and the barycenter, we can fully recover the surjection by only using $\hhL{i}$ at the negligible expense of computing the power distance as in \eqref{eq:monge_ot_dual}. In this way, for a barycenter of size $\K$ of $\N$ empirical distributions each having $M$ samples, we reduce the storage burden from $\mathcal{O}(\N M\K)$, as it would be for Sinkhorn distance-based methods, to $\mathcal{O}(\N\K)$. This is particularly useful when $M$ is large and when we need to store multiple interpolations between marginals. 

Furthermore, with the VWB, we do not even need the original distributions to parameterize the compression maps because our method is based on the geometry of the data and given the height vector $\hhL{i}$ and barycenter supports $\bm{\y}$ we can uniquely partition each original domain with a power Voronoi diagram $\RRegionL{i}$; or, equivalently, the graph of the piece-wise linear function $\theta_{\bm{\h}}(\x) = \underset{\IdxCentroid}{\max} \{\x \yL{\IdxCentroid} + \hL{\IdxCentroid}\}$.

\section{Applications} \label{sec:exp}
We demonstrate the use of VWBs with point cloud interpolation and image compression.


\subsection{Point Cloud Interpolation with Global Invariance} \label{sec:point_cloud}
Shape interpolation is a typical application of Wasserstein barycenter techniques. We compute the barycenter that has the minimum weighted average WD to all the marginal shapes. When the marginals are congruence to each other, we can leverage the congruency to regularize the process to update the barycenter. We adopt the approach in \ref{sec:regularized_vwb} and compute the VWB that has the minimum VWD to two marginal shapes. The correspondences are regularized by a rigid transformation in order to preserve the global structure of the shape. 
Ideally, we can obtain a ``mean'' shape that lies at the middle of the marginals and the rotations to the marginals share the same angles but in opposite directions. Figure~\ref{fig:icp} shows the result that verifies our hypothesis.

In this experiment, we are given two Kittens off by an unknown rigid transformation. Our goal is to interpolate, by computing a regularized Wasserstein barycenter, a new Kitten in between that is rigid to the original Kittens and the amount of translation and rotation is linear to the weights of the two original Kittens.

The marginal Kittens each have $7,805$ sample points. We assume all the samples have equal weights. They are apart from each other by a rigid transformation composed by a random translation vector $t$ and a random rotation matrix $r$. In this example, they are as follows:
\begin{equation*}
  t = \begin{bmatrix} -1.97\\ -0.73\\ -0.30\end{bmatrix}
  \quad
  r = \begin{bmatrix} 0.87 &-0.23 & 0.44 \\ 0.41 & 0.84 & -0.36 \\ -0.30 & 0.49 & 0.82\end{bmatrix}
\end{equation*}

\begin{figure}[t]
\begin{center}
\centerline{\includegraphics[width=0.9\columnwidth]{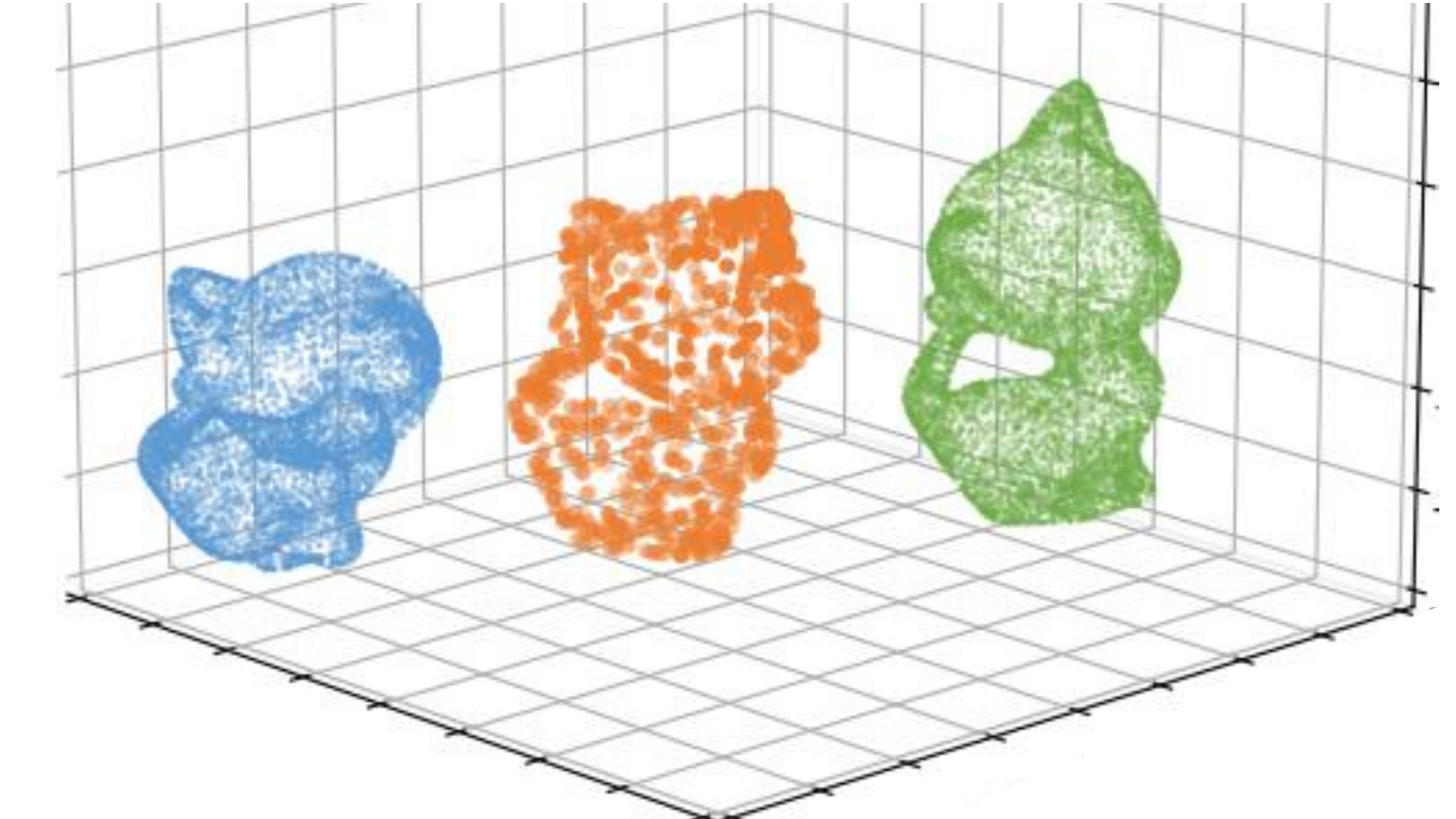}}
\caption{Point cloud interpolation that preserves global structures.}
\label{fig:icp}
\end{center}
\end{figure}

The barycenter Kitten w.r.t. the VWD (variational Wasserstein distance) has $780$ supporting Dirac measures. The regularization strength, $\lambda$, is $10$. One of the post-processing options to transport all the samples from the marginals is that for each sample find its nearest 3 or more cluster centers and use inverse barycenter coordinates to find its new location on the target Kitten in the middle.

\subsection{Image Compression}
\label{sec:exp_partition}

We demonstrate the use of our method for data compression by quantizing the RGB colors of an image into a fixed number of clusters. See Figure~\ref{fig:color} for the results. The top row shows the original images of dimension $128^2 \times 3$. We embed all the pixels into the RGB color space $\X = \{x \in \mathbb{R}^{3}\ |\ \|x\|_{\infty} \leq 1 \}$. Our goal is to compute, for example, $K=16$ centroids that partition all the pixels into their clusters. In this way, we compress the storage for each pixel from $24$ bits to $4$ bits. The second row in Figure~\ref{fig:color} shows resulting images of using Lloyd's K-means(++) algorithm, and the third row shows the results of using our VOT solver. Compared to Lloyd's, VOT well distributes the centroids into the pixel domain, resulting in a smoother transition from color to color. The correspondences in the color space we show in Appendix also confirm this. Finally, we simultaneously merge and compress the colors from all three images by using VWB. The last row shows the resulting images sharing the same color distribution that only consists of 16 discrete centroids. It has the same $\WL{2}$ to each original color distribution (marginal). In Appendix, we further show the results that comes from the centroids having different $\WL{2}$'s to each marginals, i.e. $\lambda_{i} \neq \frac{1}{N}$ in \eqref{eq:wb_nmetric}. We show the RGB color distribution of each image in Appendix.

\begin{figure}[t]
\begin{center}
\centerline{\includegraphics[width=0.95\columnwidth]{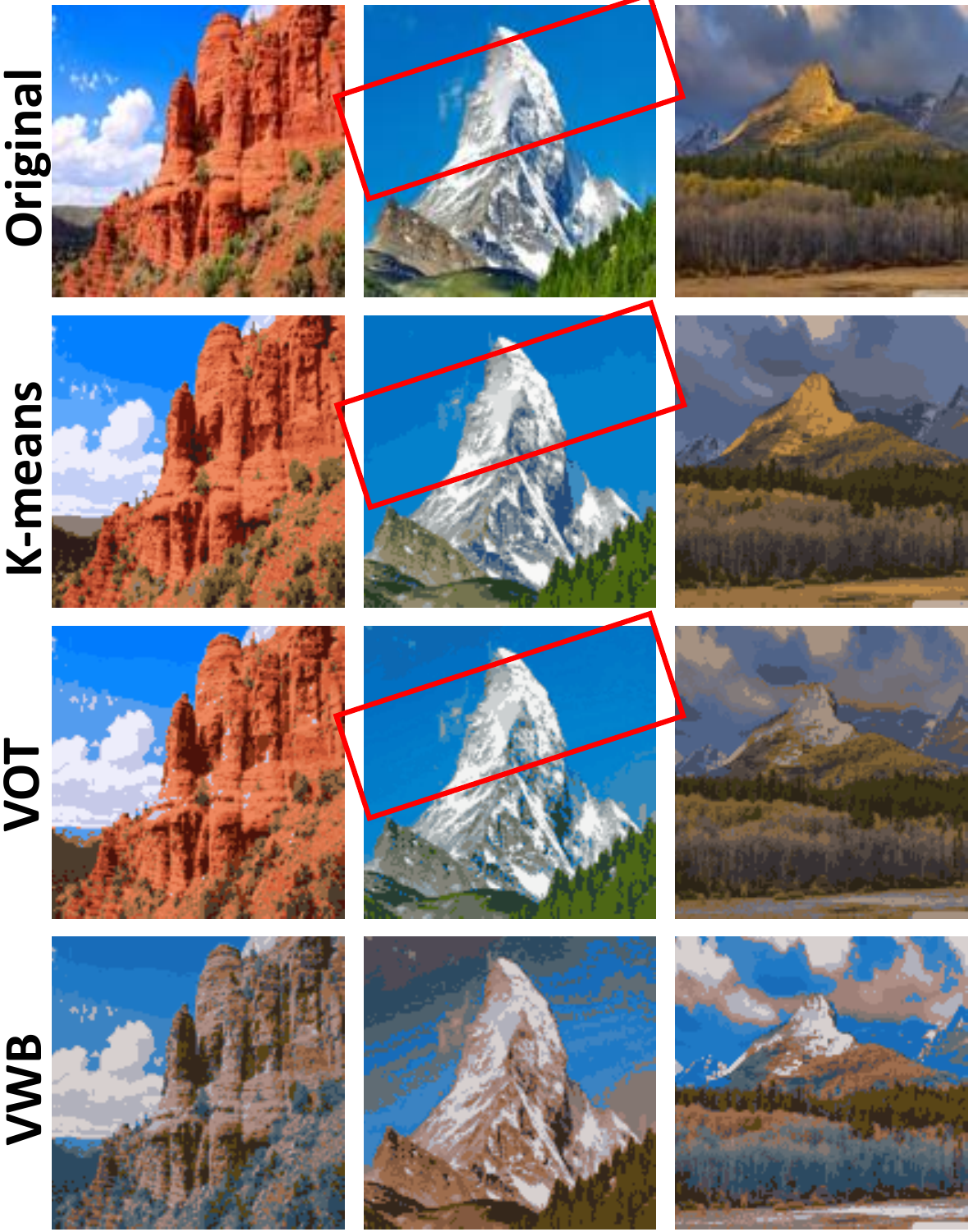}}
\caption{Quantizing RGB values from $24$ bits to $4$ bits by solving K-means, OT, and the WB. Solving OT results in smoother images; solving WBs can cluster and merge colors at the same time.}
\label{fig:color}
\end{center}
\end{figure}



\section{Discussion}
\label{sec:discuss}

We conclude by discussing the advantages and disadvantages of VWBs and several future directions.

Algorithms solving K-means like clustering problems are in general sensitive to initial choices. Common solutions include using a subset of samples and spreading the seeds across the domain, e.g., K-means++. We tried the results from K-means++ as the initial choice for our barycenters and also tried a pre-defined Gaussian distribution whose mean is the average of the means of the marginals as prior knowledge. We did not find visible differences.

Monge maps between discrete measures may not exist, e.g. transporting 3 Dirac points $\{\frac{1}{3}\delta[\xL{j}]\}_{j=1}^{3}$ to 2 Dirac points $\{\frac{1}{2}\delta[\yL{j}]\}_{\IdxCentroid=1}^{2}$. In this case, splitting the mass becomes necessary~\cite{wang2013linear}. Moreover, there might be multiple solutions, and variational solvers cannot recover any of them. An example is transporting $\{\frac{1}{2}\delta[\xL{1}=(0,-1)], \frac{1}{2}\delta[\xL{2}=(0,1)]\}$ to $\{\frac{1}{2}\delta[\yL{1}=(1,0)], \frac{1}{2}\delta[\yL{2}=(2,0)]\}$. There exist two one-to-one maps but VOT cannot recover either because the target measures cannot be distinguished by the piece-wise linear function $\theta_{\bm{\h}}(\x) = \max_{\IdxCentroid} \{\x \yL{\IdxCentroid} + \hL{\IdxCentroid}\}$, in \ref{sec:vot}. Therefore, when dealing with stochastic GD, having sufficient samples to represent the domain is key to stabilize VWBs. Luckily, increasing the empirical samples adds little computational burden if we parallelly update the correspondence for each empirical according to its nearest neighbor. On the other hand, Sinkhorn iteration-based OT methods produce soft correspondences that unavoidably result from the entropic regularization, making them robust for discrete measures. Occasionally, the soft correspondences are even desirable because they make the correspondences differentiable~\cite{cuturi2019differentiable}; Monge correspondences, however, are basically permutations which are not differentiable. In summary, our VWB producing Monge maps is suitable for clustering or partitioning problems that require binary, sparse correspondence while Sinkhorn distance-based barycenters have been tested in numerous applications in machine learning for producing robust interpolations. 

There are several future directions: 1) In the current implementation, we use exhaustive search to find the nearest centroid for each empirical sample, which takes about $80\%$ of our run time. A faster alternative for nearest neighbor search based on the power distance, which is not a Minkowski distance, will significantly reduce the run time of the VWB;  2) Whether VWBs or WBs for unbalanced measures still induce a generalized metric deserves an answer; 3) Whether our discussion still holds for $ 1 \leq p < 2$ and $p > 2$ deserves an answer; 4) Another branch of computing Monge OT is the multi-scale approach, e.g., \cite{merigot2011multiscale, schmitzer2016sparse, gerber2017multiscale}. It also partitions the target domain into sub-domains. Computing barycenters with multi-scale OT for clustering purposes is worth exploring.


\bibliography{ref}
\bibliographystyle{icml2020}

\end{document}


\twocolumn[
\icmltitle{\ignore{Geometric Clustering via Variational Wasserstein Barycenters\\ . \\}
Variational Wasserstein Barycenters for Geometric Clustering}



\icmlsetsymbol{equal}{*}




\icmlkeywords{Machine Learning, ICML, Optimal Transport, Wasserstein Barycenter, Clustering, K-Means, Variational Method, Co-Clustering}

\vskip 0.3in
]




\appendix

In this appendix, we further illustrate, detail, discuss, and prove several arguments raised in the main paper. 

\section{Proofs for Proposition 1 in 3.1} \label{sec:proof_remark1}

A set of distinct points and measures $\{\bm{\y}, \bm{\NU}\} = \{\yL{\ell}, \NUL{\ell}\}_{\ell = 1}^{K}$

$\theta_{\bm{\h}}(\x) = \underset{\ell}{\max} \{\x \yL{\ell} + \hL{\ell}\}$

$\nabla\theta_{\bm{\h}}(x) = \yL{\ell}$

$\bm{\Region} = \bigcup\limits_{\ell=1}^{K} \RegionL{\ell}$

$\RegionL{\ell} = \{\x \in \X\ |\ \x\yL{\ell} + \hL{\ell} \geq \x\yL{k} + \hL{k}, \forall k \neq \ell\}$

$\bm{\RegionU{'}} = \bigcup\limits_{\ell=1}^{K} \RegionLU{\ell}{'}$

$\RegionL{\ell}' = \{x\in \X\ |\ \|\x - \yL{\ell}\|_2^2 + \varphi_{\ell} \leq \|\x - \yL{k}\|_2^2 + \varphi_{k},\ \forall k \neq \ell \}$

According to Alexandrov~\cite{alexandrov2005convex}, there exists a unique vector $\bm{\h}$ up to a translation, so that $\int_{\RegionL{\ell}} d\MU(x) = \NUL{\ell},\ \forall \ell \in {1, ..., K}$.

Brenier proved in ~\cite{brenier1991polar} that the gradient map $\nabla\theta_{\bm{\h}}(x) \rightarrow \yL{\ell}$ induced by the unique $\bm{\h}$ also minimizes the transportation cost $\int_{\X} \|\x, \nabla\theta_{\bm{\h}}(x)\|_2^2 d\MU(\x)$.

\begin{proposition} \label{th:energy_connection}
1. $\RegionL{\ell} \equiv \RegionLU{\ell}{'}$. 

2. Minimizing $\EnergyL{2}[\bm{\h}]$ is equivalent to maximizing $\EnergyL{4}[\bm{\h}]$. 

3. The minimum point of $\EnergyL{2}[\bm{\h}]$ also minimizes $\EnergyL{3}[\bm{\h}]$.

4.$\bm{\Region}$ in $\EnergyL{2}[\bm{\h}]$ induces the Monge map $\T: x \rightarrow \yL{\ell}$.
\end{proposition}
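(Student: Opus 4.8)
The plan is to assemble Part 4 by combining the mass-balance characterization of the minimizer of $\EnergyL{2}$ with Brenier's theorem, viewing the optimal partition $\RRegion$ as the power diagram that Part 1 shows it to be. Throughout I assume, as is standard in the semi-discrete setting, that the source measure $\MU$ is absolutely continuous on $\X$, and that the target is the discrete measure $\NU = \sum_{\ell} \NUL{\ell}\,\delta_{\yL{\ell}}$ supported on the distinct points $\yL{\ell}$.

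First I would pin down the minimizer. Let $\hh^{*}$ denote the minimum point of $\EnergyL{2}[\hh]$. I would argue that $\hh^{*}$ is exactly the Alexandrov vector, i.e. that it satisfies the mass-balance identities $\int_{\RegionL{\ell}} d\MU(\x) = \NUL{\ell}$ for every $\ell$. Since $\theta_{\hh}(\x) = \max_{\ell}\{\x\yL{\ell} + \hL{\ell}\}$ is convex and piecewise linear in $\x$, differentiating $\EnergyL{2}$ with respect to $\hL{\ell}$ the only nontrivial contribution comes through the cell $\RegionL{\ell}$, and the first-order stationarity condition reduces to $\MU(\RegionL{\ell}) - \NUL{\ell} = 0$. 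The ties between affine pieces lie on finitely many hyperplanes of $\MU$-measure zero, so they neither affect the derivative nor obstruct differentiation; uniqueness of $\hh^{*}$ up to translation is precisely the quoted Alexandrov statement. This identification is also available through Parts 2--3, which tie $\EnergyL{2}$ to $\EnergyL{4}$ and $\EnergyL{3}$.

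Next I would build the candidate map and verify it is an admissible deterministic transport, then invoke optimality. Define $\T(\x) := \nabla\theta_{\hh^{*}}(\x)$; on each cell $\T(\x) = \yL{\ell}$ for $\x \in \RegionL{\ell}$, and $\T$ is single-valued $\MU$-almost everywhere because the cell boundaries carry no $\MU$-mass. Admissibility is immediate from mass balance: $\MU(\T^{-1}(\yL{\ell})) = \MU(\RegionL{\ell}) = \NUL{\ell}$ for every $\ell$, hence $\T_{\#}\MU = \sum_{\ell}\NUL{\ell}\,\delta_{\yL{\ell}} = \NU$, so $\T$ is a genuine (non-splitting) map from $\MU$ to $\NU$. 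Finally, since $\T = \nabla\theta_{\hh^{*}}$ is the gradient of the convex function $\theta_{\hh^{*}}$ and pushes $\MU$ forward to $\NU$, Brenier's theorem identifies it as the unique minimizer of the quadratic cost $\int_{\X} \|\x - \T(\x)\|_2^2\, d\MU(\x)$, which is exactly the cited Brenier conclusion for the Alexandrov $\hh^{*}$. Thus the partition $\RRegion$ obtained from minimizing $\EnergyL{2}$ induces precisely the Monge map $\T: \x \to \yL{\ell}$.

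The main obstacle I anticipate is the first step: rigorously justifying that the stationary point of $\EnergyL{2}$ coincides with the mass-balanced Alexandrov vector, that is, differentiating through the $\max$ defining $\theta_{\hh}$ and confirming that the boundary sets are $\MU$-negligible. This is where the absolute-continuity assumption on $\MU$ is essential, both for the derivative computation and for $\T$ to be well defined $\MU$-almost everywhere. Once that identification is secured, the passage to a deterministic optimal map is a direct application of Brenier's result and requires no further work.
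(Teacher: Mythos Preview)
Your argument for Part 4 is correct, but it takes a different route from the paper. The paper's proof of Part 4 is a one-liner: it simply invokes Part 1 ($\RRegion = \RRegion'$) together with the Lagrangian analysis already done there, which established that the minimizer $\T^{*}$ of $\EnergyL{4}$ induces the power-diagram partition $\RRegion'$; since $\EnergyL{4}$ is the Lagrangian of the constrained Monge problem $\EnergyL{1}$, that $\T^{*}$ is the Monge map, and hence so is the map induced by $\RRegion$.

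Your approach bypasses $\EnergyL{4}$ and Part 1 entirely. You compute the first-order condition for $\EnergyL{2}$ to identify $\hh^{*}$ with the Alexandrov mass-balance vector, then observe that $\T = \nabla\theta_{\hh^{*}}$ is the gradient of a convex function pushing $\MU$ to $\NU$, and conclude optimality directly from Brenier's theorem. This is arguably cleaner and more self-contained: it appeals only to the two results (Alexandrov and Brenier) that the paper already cites before the proposition, and it does not need the Lagrangian detour through $\EnergyL{4}$ or the region identification $\RegionL{\ell} = \RegionLU{\ell}{'}$. The paper's route, on the other hand, exploits the internal structure of the proposition and gets Part 4 essentially for free once Parts 1--2 are in hand. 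Your worry about differentiating through the $\max$ and the $\MU$-negligibility of cell boundaries is exactly the right technical checkpoint; the paper does not address it explicitly (it is implicit in the cited variational result of Gu et al.\ on the convexity and gradient of $\EnergyL{2}$), so your version is in fact more careful on that point.
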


\subsection{$\RegionL{\ell} \equiv \RegionLU{\ell}{'}$}

\begin{proof}
\begin{equation}
\begin{split}
    \min_{\T} \EnergyL{1}[\T] = & \int_{\X} \|\x - T(\x)\|_2^2 d\MU(\x) \\
    = & \sum_{\ell=1}^{K} \int_{\RegionL{\ell}'} \|\x - T(\x)\|_2^2 d\MU(\x), \\
    s.t. \int_{\RegionL{\ell}'} d\MU(x) &= \NUL{\ell}, \quad \RegionL{\ell}' \eqdef \{\x \in \X\ |\ \T(x) = \yL{\ell}\}. \nonumber
\end{split}
\end{equation}
%

\begin{equation}
\begin{split}
    & \max_{\bm{\varphi}} \min_{\T} \EnergyL{4}[\bm{\varphi}, \T] \eqdef \\
    & \sum_{\ell=1}^{K} \int_{\RegionL{\ell}'} \|\x - T(\x)\|_2^2 d\MU(\x) + \sum_{\ell=1}^{K} \varphi_{\ell} \left(\int_{\RegionL{\ell}'} d\MU(x) - \NUL{\ell} \right) \\
    = & \sum_{\ell=1}^{K} \int_{\RegionL{\ell}'} \left( \|\x - T(\x)\|_2^2 + \varphi_{\ell} \right) d\MU(\x) - \sum_{\ell=1}^{K} \varphi_{\ell} \NUL{\ell}\\
    = & \sum_{\ell=1}^{K} \int_{\RegionL{\ell}'} \left( \|\x - \yL{\ell}\|_2^2 + \varphi_{\ell} \right) d\MU(\x) - \sum_{\ell=1}^{K} \varphi_{\ell} \NUL{\ell}. \nonumber
\end{split}
\end{equation}

Note that $\bm{\Region}$ or $\bm{\Region}'$ is induced by $\T$, but we omit $\T$ in the notation for simplicity.

Suppose $\TU{*}$ is the minimizer of $\EnergyL{4}[\T]$, then $\TU{*}$ induces the graph $\RegionL{\ell}' = \{x\in \X\ |\ \|\x - \yL{\ell}\|_2^2 + \varphi_{\ell} \leq \|\x - \yL{k}\|_2^2 + \varphi_{k},\ \forall k \neq \ell \}$ because otherwise there would exist $x \in \RegionL{\ell}$ such that $\|\x - \yL{\ell}\|_2^2 + \varphi_{\ell} > \|\x - \yL{k}\|_2^2 + \varphi_{k}$ and that is contradictory to the fact that $\TU{*}$ is the minimizer. Therefore, $\RegionL{\ell} \equiv \RegionLU{\ell}{'}\ \forall\ \ell$.
\end{proof}

\subsection{Minimizing $\EnergyL{2}[\bm{\h}]$ is equivalent to maximizing $\EnergyL{4}[\bm{\h}]$}

\begin{equation}\label{eq:theta}
    \min\ \EnergyL{3}[\bm{\h}] \eqdef \int_{\X} \theta_{\bm{\h}}(\x) d\MU(\x) - \sum_{\ell = 1}^{K}\NU(\yL{\ell})\hL{\ell}. \nonumber
\end{equation}


$\RegionLU{\ell}{'} = \{\x \in \X\ |\ \|\x - \yL{\ell}\|_{2}^{2} + \LagPhiL{\ell} \leq \|\x - \yL{k}\|_{2}^{2} + \LagPhiL{k}, \forall k \neq \ell\}$

\begin{proof}
Given that $\RegionL{\ell} = \RegionLU{\ell}{'}$,
\begin{equation}
    \begin{split}
        \RegionL{\ell}  = & \{\x \in \X\ |\ \x\yL{\ell} + \hL{\ell} \geq \x\yL{k} + \hL{k}, \forall k \neq \ell\} \\ \nonumber
    \end{split}
\end{equation}

\begin{equation}
    \begin{split}
        \RegionLU{\ell}{'} = & \{\x \in \X\ |\ \|\x - \yL{\ell}\|_{2}^{2} + \LagPhiL{\ell} \leq \|\x - \yL{k}\|_{2}^{2} + \LagPhiL{k}, \forall k \neq \ell\} \\
        = & \{\x \in \X\ |\ 2\x\yL{\ell} - \yLU{\ell}{2} - \LagPhiL{\ell} \geq 2\x\yL{k} - \yLU{k}{2} - \LagPhiL{k}, \forall k \neq \ell\} \nonumber
    \end{split}
\end{equation}

\begin{equation}
    \begin{split}
        \Longrightarrow 2\hL{\ell} &= -\yLU{\ell}{2} - \LagPhiL{\ell} \\
          \hL{\ell} &= - \frac{1}{2} \left(\yLU{\ell}{2} + \LagPhiL{\ell}\right) \\
          \LagPhiL{\ell} &= -2\hL{\ell} - \yLU{\ell}{2} \nonumber
    \end{split}
\end{equation}

\begin{equation}
    \begin{split}
        \EnergyL{4}[\bm{\h}] = & \sum_{\ell = 1}^{K} \int_{\RegionL{\ell}} \big(\|\x - \yL{\ell}\|_{2}^{2} + \LagPhiL{\ell} \big) d\MU(\x) + \sum_{\ell = 1}^{K} \LagPhiL{\ell}\NU(\yL{\ell}) \\ 
        = & \sum_{\ell = 1}^{K} \int_{\RegionL{\ell}} \xU{2} d\MU(\x) - 2 \sum_{\ell=1}^{K} \int_{\RegionL{\ell}} \x\yL{\ell} d\MU(\x) \\
        & + \sum_{\ell=1}^{K} \int_{\RegionL{\ell}} \yLU{\ell}{2} d\MU(\x) \\
        & - 2\sum_{\ell=1}^{K} \int_{\RegionL{\ell}} \hLU{\ell}{2} d\MU(\x) - \sum_{\ell=1}^{K} \int_{\RegionL{\ell}} \yLU{\ell}{2} d\MU(\x) \\
        & + 2\sum_{\ell = 1}^{K} \hL{\ell}\NU(\yL{\ell}) + \sum_{\ell = 1}^{K} \yLU{\ell}{2}\NU(\yL{\ell}) \\
 \nonumber
    \end{split}
\end{equation}
\begin{equation}
    \begin{split}
        = & - 2 \sum_{\ell=1}^{K} \int_{\RegionL{\ell}} \x\yL{\ell} d\MU(\x) - 2\sum_{\ell=1}^{K} \int_{\RegionL{\ell}} \hL{\ell} d\MU(\x) \\
        & + 2\sum_{\ell = 1}^{K} \hL{\ell}\NU(\yL{\ell}) + constants \\
        = & - 2 \sum_{\ell=1}^{K} \int_{\RegionL{\ell}} \left(\x\yL{\ell} + \hL{\ell}\right) d\MU(\x) \\
        & - 2 \sum_{\ell = 1}^{K} \hL{\ell}\NU(\yL{\ell}) + constants \\
        = & - 2 \EnergyL{2}[\bm{\h}] + constants. \nonumber
    \end{split}
\end{equation}
Therefore, minimizing $\EnergyL{2}[\bm{\h}]$ is equivalent to maximizing $\EnergyL{4}[\bm{\h}]$. 
\end{proof}

\subsection{The minimum point of $\EnergyL{2}[\bm{\h}]$ also minimizes $\EnergyL{3}[\bm{\h}]$}
\begin{proof}
\begin{equation}
    \begin{split}
        \nabla \EnergyL{3}[\bm{\h}] =& \bigg\{ \frac{\partial \EnergyL{3}[\bm{\h}]}{\partial \hL{\ell}} \bigg\}_{\ell=1}^{K}\\
        =& \bigg\{\int_{\X} \frac{\partial \theta_{\bm{\h}}}{\hL{\ell}} d\MU(x) - \NUL{\ell} \bigg\}_{\ell=1}^{K}\\
        =& \bigg\{\int_{\RegionL{\ell}} d\MU(x) - \NUL{\ell} \bigg\}_{\ell=1}^{K}\\ \nonumber
    \end{split}
\end{equation}
%
\begin{equation}
    \begin{split}
        \nabla \EnergyL{2}[\bm{\h}] =& \bigg\{ \frac{\partial \EnergyL{2}[\bm{\h}]}{\partial \hL{\ell}} \bigg\}_{\ell=1}^{K}\\
        =& \bigg\{\int_{\RegionL{\ell}} d\MU(x) - \NUL{\ell} \bigg\}_{\ell=1}^{K}\\ \nonumber
    \end{split}
\end{equation}

Therefore, $\nabla \EnergyL{2}[\bm{\h}] = \nabla \EnergyL{3}[\bm{\h}]$. As per proved in~\cite{gu2013variational}, both $\EnergyL{2}[\bm{\h}]$ and $\EnergyL{3}[\bm{\h}]$ are strictly convex which means their minimum points are $\nabla \EnergyL{2}[\bm{\h}] \rightarrow \bm{0}$ or $\nabla \EnergyL{3}[\bm{\h}] \rightarrow \bm{0}$. In order to solve $\EnergyL{3}[\bm{\h}]$, we can then instead solve $\EnergyL{2}[\bm{\h}]$ for the optimal $\bm{\hU{*}}$.
\end{proof}

\subsection{$\bm{\Region}$ in $\EnergyL{2}[\bm{\h}]$ induces the Monge map $\T: x \rightarrow \yL{\ell}$}

\begin{proof}

\begin{equation}\label{eq:vot}
    \begin{gathered}
    \min\ \EnergyL{2}[\bm{\h}] \eqdef \int_{\bm{0}}^{\bm{\h}} \sum_{\ell=1}^{K} \int_{\RegionL{\ell}} d\MU(\x) d\hL{\ell} - \sum_{\ell = 1}^{K}\NU(\yL{\ell})\hL{\ell}, \\
    \RegionL{\ell} = \{\x \in \X\ |\ \x\yL{\ell} + \hL{\ell} \geq \x\yL{k} + \hL{k}, \forall k \neq \ell\}. \nonumber
    \end{gathered}
\end{equation}

This is indeed true since $\bm{\Region} = \bm{\Region}'$ and $\bm{\Region}'$ induces the Monge map.
\end{proof}

\section{Derivatives of VWBs} \label{sec:vwb_gd}
%
\begin{equation}\label{eq:vwb}
\begin{gathered}
    \min_{\{\bm{\hL{i}}\}_{i=1}^{N}} \EnergyL{5}[\NU] \\
    \eqdef \frac{1}{N} \sum_{i=1}^{N} \left( \int_{\bm{0}}^{\bm{\hL{i}}} \sum_{\ell=1}^{K} \int_{\RegionL{i, \ell}} d\MUL{i}(x) d\hL{i, \ell} - \sum_{\ell = 1}^{K}\NU(\yL{\ell})\hL{i, \ell} \right)\\ \nonumber
\end{gathered}
\end{equation}

The gradient of $\EnergyL{5}[\bm{\h}]$ is 
%
\begin{equation}
\begin{split}
    \nabla \EnergyL{5}[\bm{\h}] &= \left\{\left\{\pdv{\EnergyL{5}}{\hL{i, \ell}} \right\}_{\ell = 1}^{K}\right\}_{i=1}^{N} \\
    &= \left\{\left\{\int_{\RegionL{i, \ell}} d\MUL{i}(x) - \NU(\yL{\ell})\right\}_{\ell = 1}^{K}\right\}_{i=1}^{N}. \nonumber
\end{split}
\end{equation}

The Hessian of $\EnergyL{5}[\bm{\h}]$ is then 

\begin{equation}
\begin{split}
    H &= \left(\frac{\partial^{2} \EnergyL{5}[\bm{\h}] }{\partial \hL{i, \ell}\partial \hL{j, k}} \right) \\
	&=
	\left\{
	\begin{array}{ll}
		 \sum_{k} \cfrac{\int_{f_{i,\ell, k}}\mu(x)dx}{\|\yL{\ell}-\yL{k}\|}, &\quad i = j,\ \forall k, s.t.\  f_{i, \ell, k} \neq \emptyset, \\[3ex]
		 -\cfrac{\int_{f_{i, \ell, k}}\MUL{i}(x)dx}{\|\yL{\ell}-\yL{k}\|}, &\quad i = j,\ f_{i, \ell, k} \neq \emptyset, \\[3ex]
		 0, &\quad i \neq j.
	\end{array}
	\right. \nonumber
	\label{eq:hessian}
\end{split}
\end{equation}
%
$f_{i,\ell, k} = \RegionL{i, \ell} \cap \RegionL{i, k}$.

This is a very sparse matrix since variables from different VOT problem, $i, j$, are excluded from each other and in each VOT problem one power Voronoi cell is only adjacent to a few other cells. 

\section{Updating Both Supports $\bm{\y}$ and Measures $\bm{\NU}$ in 4.1} \label{sec:wbs_update_ynu}

We show a comparison between our method and \cite{ye2017fast} on fitting a Gaussian mixture to the target domain. The two methods lead to similar decision boundaries but our embedding is more evenly distributed into the target domain according to the density.

\begin{figure}[t]
\begin{center}
\centerline{\includegraphics[width=\linewidth]{plots.pdf}}
\caption{Matching two Gaussian mixtures with \cite{ye2017fast} and our method. Updating both supports and measures may result in centroids not evenly distributed into the target domain, which although may not affect the classification boundary in this example.}
\end{center}
\end{figure}

\section{Critical Point of VWBs w.r.t. $\{\NUL{\ell}\}_{\ell=1}^{K}$ in 4.1} \label{sec:nu}

We update $\NUL{\ell}$ so that $\RegionL{i}\ \forall\ i$ induced by the optimal $\TLU{i}{*}$ forms an unweighted Voronoi diagram where each empirical sample $\x$ is mapped to its nearest $\yL{\ell}$ w.r.t. the quadratic Euclidean distance. Now, by contradiction, suppose we update $\NUL{\ell}$ in such as way that the OT map does not form an unweighted Voronoi diagram, i.e. $\TL{i}' \neq \TLU{i}{*}$. Then, there must be some sample $\x$ that is mapped to some $\yL{k} \neq \yL{\ell}$ which is not the nearest centroid. In this case, the total transportation cost increases because $\|\x - \yL{k}\|_2^2 > \|\x - \yL{\ell}\|_2^2$. Therefore, the minimum total cost w.r.t. $\NUL{\ell}$ is achieved when $\NUL{\ell}$ induces the OT map that forms an unweighted Voronoi diagram. When we update $\NUL{\ell}$, we only need to construct an unweighted Voronoi diagram according to the current supports $\bm{\y}$ and assign the total mass within each cell to its corresponding support. This Voronoi diagram itself will also be the OT map like Lloyd's algorithm.

\section{Algorithm of Computing VWBs in 4.1} \label{sec:algorithm}

\begin{equation}\label{eq:vwb_dh}
\begin{gathered}
    \nabla \EnergyL{5}[\bm{\hL{i}}] = \left\{\pdv{\EnergyL{5}}{\hL{i, \ell}} = \int_{\RegionL{i, \ell}} d\MUL{i}(x) - \NUL{\ell}\right\}_{\ell = 1}^{K},\\
    \bm{\hL{i}}^{(t+1)} = \bm{\hL{i}}^{(t)} - \eta \nabla \EnergyL{5}[\bm{\hL{i}}].
\end{gathered}
\end{equation}

\begin{equation} \label{eq:vwb_dy}
\begin{split}
    \yLU{\ell}{*}
    = \frac{\sum_{i = 1}^{N} \int_{\RegionL{i, \ell}} xd\MUL{i}(\x)}{N\sum_{i = 1}^{N} \int_{\RegionL{i, \ell}}d\MUL{i}(\x)} 
    \approx \frac{\sum_{i = 1}^{N} \sum_{\x \in \RegionL{i, \ell}} x \MUL{i}(\x)}{N\sum_{i = 1}^{N} \sum_{\x \in \RegionL{i, \ell}} \MUL{i}(\x)},
\end{split}
\end{equation}

\begin{equation} \label{eq:vwb_dv}
\begin{split}
    \NUUL{*}{\ell}
    = \frac{1}{N}\sum_{i = 1}^{N} \int_{\RegionLU{i, \ell}{*}} d\MUL{i}(\x)
    \approx \frac{1}{N} \sum_{i = 1}^{N} \sum_{ \x \in \RegionLU{i, \ell}{*}} \MUL{i}(\x),
\end{split}
\end{equation}

\begin{algorithm}[ht]
    \caption{Variational Wasserstein Barycenters}
    \label{alg:vwb}
\begin{algorithmic}
    \STATE {\bfseries Input:}  $\{\MUL{i}\}_{i=1}^{N}$, $K$
    \STATE Initialize $\NU \in \MUS(\Y)$.
    \REPEAT
        \STATE Compute $\TL{i}$ between $\NU$ and each $\MUL{i}$ by solving \eqref{eq:vwb}.
        \STATE Update partition or assignment (if discrete), $\bm{\Region}$, according to $\bm{\T}$.
        \IF{Free support} 
            \STATE {Update $\yL{\ell}\ \forall\ \ell$ according to~\eqref{eq:vwb_dy}}
        \ENDIF
        \IF {Free measure}
            \STATE {Update $\NUL{\ell}\  \forall\ \ell$ according to~\eqref{eq:vwb_dv}}
        \ENDIF
    \UNTIL{$\NU$ converges}
\end{algorithmic}
\end{algorithm}

\section{Proofs of Propositions in 4.2} \label{sec:vwbnmetric}

From now, we use a few new notations here that are more clear. We denote a collection of probability distributions by $\MUNL{1:N} \eqdef \{\MUL{i}\}_{i=1}^{N}$ and a certain permutation by $\bm{\MU}_{\sigma(1:N)}$.
Then, we change the notation for the total Wasserstein distance induced by the Wasserstein barycenter as $\WBL{\NU}(\MUNL{1:N}) \longleftarrow \WBL{1:N}(\NU)$ because the variable is the barycenter $\NU$ and the input is the marginals $\MUNL{1:N}$.

\begin{equation} \label{eq:wb_nmetric}
    \WBL{\NU}(\MUNL{1:N}) \eqdef \underset{\NU \in \MUS(\Y)}{\inf} \frac{1}{N}\sum_{i = 1}^{N} \WUL{2}{2}(\MUL{i}, \NU),
\end{equation}
\ignore{
According to~\cite{kiss2018generalization}, for a metric space $(\MUS(X), d)$ and $N \geq 2$,
%
$$d(\MUL{1}, ..., \MUL{N}) = \underset{\MUL{N+1 \in \MUS(X)}}{\inf} \sum_{i=1}^{N}d(\MUL{i}, \MUL{N+1}) $$
%
defines a generalized metric. If we embody the metric with the quadratic Wasserstein distance $\WLU{2}{2}$, then $\WBL{\NU}(\MUNL{1:N})$ is indeed an n-metric. For completeness, we prove the n-metric properties as follows:
}

\subsection{Proposition 2}
\begin{proposition}
$\WBL{\NU}(\MUNL{1:N})$ defines a generalized metric among $\MUNL{1:N}$, $\forall\ N \geq 2$. Specifically, $\WBL{\NU}(\MUNL{1:N})$ satisfies the following properties. \\
1) Non-negativity: $\WBL{\NU}(\MUNL{1:N}) \geq 0$.\\
2) Symmetry: $\WBL{\NU}(\MUNPL{1}) = \WBL{\NU}(\MUNPL{2})$, where $\sigma_1(1:N)$ and $\sigma_2(1:N)$ are different permutations of the set ${1:N}$.\\
3) Identity: $\WBL{\NU}(\MUNL{1:N}) = 0 \Longleftrightarrow \MUL{i} = \MUL{j}, \forall i \neq j$.\\
4) Triangle inequality: $\WBL{\NU}(\MUNL{1:N}) \leq \sum_{i=1}^{N} \WBL{\NU}(\MUNL{1:N+1\backslash i})$.
\end{proposition}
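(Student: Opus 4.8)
The plan is to check the four properties in increasing order of difficulty, disposing of 1--3 quickly and concentrating on the generalized triangle inequality. Non-negativity is immediate: every summand $\WUL{2}{2}(\MUL{i}, \NU)$ is non-negative, so the average and its infimum over $\NU \in \MUS(\Y)$ are non-negative. Symmetry follows because $\sum_{i} \WUL{2}{2}(\MUL{i}, \NU)$ does not depend on the order of the indices, hence neither does the infimum defining $\WBL{\NU}$. For the identity property, the implication $\MUL{i} \equiv \MUL{j} \Rightarrow \WBL{\NU} = 0$ is obtained by using the common measure as $\NU$; for the converse, $\WBL{\NU}(\MUNL{1:N}) = 0$ produces a sequence of near-optimal $\NU$ along which each $\WL{2}(\MUL{i}, \NU) \to 0$, and then the genuine metric triangle inequality $\WL{2}(\MUL{i}, \MUL{j}) \le \WL{2}(\MUL{i}, \NU) + \WL{2}(\NU, \MUL{j})$ forces $\WL{2}(\MUL{i}, \MUL{j}) = 0$, i.e.\ $\MUL{i} = \MUL{j}$ for all $i,j$.

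For the generalized triangle inequality I would run the Fermat/barycenter argument for $n$-metrics (as in the Kiss et al.\ construction quoted above), carried out with the genuine metric $\WL{2}$ rather than its square. Let $\NULU{i}{*}$ (approximately) attain the barycenter cost of $\MUNL{1:N+1\backslash i}$. Since the left-hand side is an infimum, feeding in the single test measure $\NU = \MUL{N+1}$ already gives an upper bound and reduces the claim to bounding $\frac1N\sum_{k=1}^{N}\WL{2}(\MUL{k}, \MUL{N+1})$ by the right-hand side. I would then expand each such distance by the triangle inequality through $\NULU{\sigma(k)}{*}$, where $\sigma$ is a fixed-point-free permutation (a derangement) of $\{1,\dots,N\}$, which exists for every $N \ge 2$; this produces the two terms $\WL{2}(\MUL{k}, \NULU{\sigma(k)}{*})$ and $\WL{2}(\MUL{N+1}, \NULU{\sigma(k)}{*})$. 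Because $\sigma$ is bijective and has no fixed point, these $2N$ produced terms are pairwise distinct and each coincides with a distance already present in $\sum_{i=1}^{N}\WBL{\NU}(\MUNL{1:N+1\backslash i})$; discarding the remaining non-negative distances then closes the inequality with constant exactly one.

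The step I expect to be the genuine obstacle is reconciling this metric argument with the fact that $\WBL{\NU}$ is defined through the \emph{squared} distance $\WUL{2}{2}$, whereas every triangle-inequality manipulation above is valid only for $\WL{2} = \sqrt{\WUL{2}{2}}$, the quantity that actually satisfies the triangle inequality. The barycenters $\NULU{i}{*}$ minimize a sum of \emph{squares}, not a sum of distances, so the term-matching between the expansions of $\WL{2}(\MUL{k}, \MUL{N+1})$ and the summands on the right must be organized at the level of $\WL{2}$ and cannot simply be copied from the squared objective; indeed a naive squared version of the inequality fails even for two Dirac masses on a line, so the metric reinterpretation is essential rather than cosmetic. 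Confirming that the derangement matching exhausts each distance at most once, that passing to $\WL{2}$ loses no multiplicative constant, and that barycentric minimizers exist (or may be replaced by $\epsilon$-optimal ones) is the real content of the proof; the non-negativity and symmetry bookkeeping of parts 1--3 is routine by comparison.
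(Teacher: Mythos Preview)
Your handling of properties 1)--3) matches the paper's and is correct; these are routine. Your worry about property 4) is exactly on target, and in fact the obstacle you flag is fatal: the triangle inequality as stated, with the \emph{squared} Wasserstein distance inside the barycenter functional, is false. Take $N=2$ and Dirac masses on the line, $\MUL{1}=\delta_{0}$, $\MUL{2}=\delta_{2}$, $\MUL{3}=\delta_{1}$. For Dirac marginals the barycenter is the Dirac at the arithmetic mean and $\WBL{\NU}$ reduces to the mean squared deviation, so $\WBL{\NU}(\MUL{1},\MUL{2})=1$ while $\WBL{\NU}(\MUL{2},\MUL{3})=\WBL{\NU}(\MUL{1},\MUL{3})=\tfrac14$; the right-hand side of 4) is $\tfrac12<1$. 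This is precisely the ``two Dirac masses on a line'' failure you anticipated.

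The paper's argument hides the error as follows. With $\NUU{*}$ the barycenter of $\MUNL{1:N}$, it writes
\[
\WBL{\NU}(\MUNL{1:N+1\backslash i}) \;=\; \frac{1}{N}\Bigl(\sum_{j=1}^{N+1}\WUL{2}{2}(\MUL{j},\NUU{*})-\WUL{2}{2}(\MUL{i},\NUU{*})\Bigr),
\]
but the left side is an infimum over $\NU$ and $\NUU{*}$ is only one competitor, so only ``$\le$'' is justified. That inequality points the wrong way: it upper-bounds the right-hand side of the claimed triangle inequality, whereas a lower bound is what is needed. The subsequent algebra in the paper is correct but proves nothing once this step is repaired.

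Your derangement/Fermat-point argument is sound, but it establishes 4) for the functional $\inf_{\NU}\tfrac1N\sum_{i}\WL{2}(\MUL{i},\NU)$ built from the genuine metric $\WL{2}$, i.e.\ the Kiss et al.\ $n$-metric, not for the squared version $\WBL{\NU}$ as defined in~\eqref{eq:wb_nmetric}. There is no missing trick to rescue the squared statement; you should record the counterexample and reformulate the proposition with $\WL{2}$ in place of $\WUL{2}{2}$, after which your plan goes through verbatim.
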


\begin{proof}
Suppose the Wasserstein barycenter of $\MUNL{i:N}$ is $\NULU{1:N}{*}$, i.e. $\WBL{\NU}(\MUNL{1:N}) = \frac{1}{N}\sum_{i = 1}^{N} \WUL{2}{2}(\MUL{i}, \NUU{*})$.

1) Non-negativity: $\WBL{\NU}(\MUNL{1:N}) \geq 0$.

Since $\WLU{2}{2}(\MUL{i}, \NUU{*}) \geq 0$, $\WBL{\NU}(\MUNL{1:N})$ is obviously not negative. The equal sign holds when $\WLU{2}{2}(\MUL{i}, \NUU{*})$ is zero for all $i$. When that happens, $\MUL{i} = \NUU{*}$ for all $i$. It also implies that all marginals are equal to each other, i.e. $\MUL{i} = \MUL{j},\ \forall\ i \neq j$.

2) Symmetry: $\WBL{\NU}(\MUNPL{1}) = \WBL{\NU}(\MUNPL{2})$

This is true since the summation of the WDs is symmetric.

3) Identity: $\WBL{\NU}(\MUNL{1:N}) = 0 \Longleftrightarrow \MUL{i} = \MUL{j}, \forall i \neq j$.

This is true according to our discussion in 1).

4) Triangle inequality: $\WBL{\NU}(\MUNL{1:N}) \leq \sum_{i=1}^{N} \WBL{\NU}(\MUNL{1:N+1\backslash i})$.

\begin{equation}
    \WBL{\NU}(\MUNL{1:N+1\backslash i}) = \frac{1}{N}\left( \sum_{j=1}^{N+1} \WUL{2}{2}(\MUL{j}, \NUU{*}) - \WUL{2}{2}(\MUL{i}, \NUU{*})\right), \nonumber
\end{equation}
%
\begin{equation}
    \sum_{i=1}^{N} \WBL{1:N+1 \backslash i}(\NU) = \sum_{j=1}^{N+1} \WUL{2}{2}(\MUL{j}, \NUU{*}) - \frac{1}{N} \sum_{i=1}^{N} \WUL{2}{2}(\MUL{i}, \NUU{*}).
    \nonumber
\end{equation}
%
To prove 4), we need to show that 
\begin{equation}
  \frac{1}{N} \sum_{i=1}^{N} \WUL{2}{2}(\MUL{i}, \NUU{*}) \leq \sum_{j=1}^{N+1} \WUL{2}{2}(\MUL{j}, \NUU{*}) - \frac{1}{N} \sum_{i=1}^{N} \WUL{2}{2}(\MUL{i}, \NUU{*}) \nonumber
\end{equation}
\begin{equation}
  \frac{2}{N} \sum_{i=1}^{N} \WUL{2}{2}(\MUL{i}, \NUU{*}) \leq \sum_{j=1}^{N} \WUL{2}{2}(\MUL{j}, \NUU{*}) + \WUL{2}{2}(\MUL{N+1}, \NUU{*}).\nonumber
\end{equation}
This is indeed true for $N \geq 2$ because the left and the first term on the right cancel out and the second term on the right is non-negative. The equal sign holds if $N=2$ and $\MUL{N+1}=\NUU{*}$, which means the additional $\MUL{N+1}$ \textit{is} the barycenter.
\end{proof}

\subsection{Proposition 3}

\begin{proposition}\label{the:wb_nmetric_bigon}
The bound of the triangle inequality in Proposition~2 can be tightened by a linear factor. Specifically, we have $\frac{N}{2} \WBL{\NU}(\MUNL{1:N}) \leq \sum_{i=1}^{N} \WBL{\NU}(\MUNL{1:N+1})$.
\end{proposition}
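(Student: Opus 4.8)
The plan is to reduce the claim, via the identities already derived in the proof of Proposition~2, to a one-line sign check. I read the right-hand side as the same quantity that appears in the triangle inequality of Proposition~2, namely $\sum_{i=1}^{N}\WBL{\NU}(\MUNL{1:N+1\backslash i})$: this is the bound being sharpened, and the content of the statement is that the coefficient on the left improves from $1$ to $N/2$. As in Proposition~2 I would fix the common reference barycenter $\NUU{*}$ and reuse the two facts established there: $\WBL{\NU}(\MUNL{1:N}) = \frac{1}{N}\sum_{i=1}^{N}\WUL{2}{2}(\MUL{i},\NUU{*})$, and $\sum_{i=1}^{N}\WBL{\NU}(\MUNL{1:N+1\backslash i}) = \sum_{j=1}^{N+1}\WUL{2}{2}(\MUL{j},\NUU{*}) - \frac{1}{N}\sum_{i=1}^{N}\WUL{2}{2}(\MUL{i},\NUU{*})$.

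First I would abbreviate $A = \sum_{i=1}^{N}\WUL{2}{2}(\MUL{i},\NUU{*}) \ge 0$ and $B = \WUL{2}{2}(\MUL{N+1},\NUU{*}) \ge 0$. Substituting the first identity turns the left-hand side $\frac{N}{2}\WBL{\NU}(\MUNL{1:N})$ into $\tfrac{1}{2}A$, and substituting the second turns the right-hand side into $\tfrac{N-1}{N}A + B$. Hence the target inequality becomes $\tfrac{1}{2}A \le \tfrac{N-1}{N}A + B$, which after collecting the $A$-terms is equivalent to $\tfrac{N-2}{2N}A + B \ge 0$.

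The final step is to observe that this holds for every $N \ge 2$: the coefficient $\tfrac{N-2}{2N}$ is non-negative exactly in this range, while $A$ and $B$, being sums of squared Wasserstein distances, are non-negative. This also exposes the equality case and confirms the linear-factor strengthening: at $N = 2$ the coefficient vanishes and one recovers Proposition~2's bound (with equality iff $B = 0$, i.e.\ $\MUL{N+1} = \NUU{*}$), whereas for $N > 2$ the factor $N/2$ is strictly larger than $1$ and the bound is genuinely tighter.

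Computationally there is no hard step; the two points that need care are the reading of the statement's right-hand side (resolving the missing set-difference in the subscript so that it matches the quantity bounded in Proposition~2) and the bookkeeping against a single common reference $\NUU{*}$. The latter is the same convention already adopted in Proposition~2, so the present argument stands on exactly the footing established there and requires nothing beyond the two identities quoted above together with the non-negativity of $\WUL{2}{2}$.
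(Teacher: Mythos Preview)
Your proposal is correct and follows essentially the same route as the paper: both reuse the two identities from Proposition~2, reduce the claim to an elementary inequality in $A=\sum_{i=1}^{N}\WUL{2}{2}(\MUL{i},\NUU{*})$ and $B=\WUL{2}{2}(\MUL{N+1},\NUU{*})$, and finish by non-negativity. Your reading of the right-hand side as $\sum_{i=1}^{N}\WBL{\NU}(\MUNL{1:N+1\backslash i})$ is the intended one, and your reduction to $\tfrac{N-2}{2N}A+B\ge 0$ is in fact a cleaner bookkeeping than the paper's terser ``cancel the $\tfrac{2}{N}$'' step, which arrives at the displayed $A\le A+B$.
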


\begin{proof}
By adding a linear factor $\frac{N}{2}$ in the front of the l.h.s, we cancel out the $\frac{2}{N}$ in the last line of the proof above.
%
\begin{equation}
  \sum_{i=1}^{N} \WUL{2}{2}(\MUL{i}, \NUU{*}) \leq \sum_{j=1}^{N} \WUL{2}{2}(\MUL{j}, \NUU{*}) + \WUL{2}{2}(\MUL{N+1}, \NUU{*}).\nonumber
\end{equation}
%
The inequality still holds. The equal sign holds if $\MUL{N+1}=\NUU{*}$.
\end{proof}

\subsection{Corollary 2} \label{sec:vwb2metric}
\begin{corollary}\label{the:vwb_2metric}
$\VWBL{1:2}$ induces a (2-)metric between $\MUL{1}$ and $\MUL{2}$. In addition, $\VWBL{1:2}$ is lower-bounded by $\frac{1}{4}\WLU{2}{2}(\MUL{1},\MUL{2})$ when \ignore{$\MUL{1}$, $\MUL{2}$, and $\NU$ have the same number of supports}$|\MUL{i}| = |\NU| = K$.
\end{corollary}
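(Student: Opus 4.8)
The plan is to read $\VWBL{\NU}(\MUL{i},\MUL{j}) \eqdef \inf_{|\NU| = K} \tfrac12\big(\WLU{2}{2}(\MUL{i},\NU) + \WLU{2}{2}(\MUL{j},\NU)\big)$ as the variational (i.e.\ $K$-atom-constrained) two-marginal barycenter cost, with $\VWBL{1:2} = \VWBL{\NU}(\MUL{1},\MUL{2})$, and to obtain both assertions by specializing the general barycenter results to $N=2$ and by comparing the constrained infimum against its unconstrained counterpart. I would prove the lower bound first, since the identity axiom of the metric will consume it.

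For the lower bound, I would use that shrinking the feasible set cannot decrease the infimum: because $\{\NU : |\NU| = K\} \subseteq \MUS(\Y)$, we have $\VWBL{\NU}(\MUL{1},\MUL{2}) \geq \inf_{\NU} \tfrac12\big(\WLU{2}{2}(\MUL{1},\NU) + \WLU{2}{2}(\MUL{2},\NU)\big)$. The unconstrained right-hand side is the classical two-marginal barycenter, attained at the displacement-interpolation midpoint $\NUL{1/2}$, for which $\WL{2}(\MUL{i},\NUL{1/2}) = \tfrac12 \WL{2}(\MUL{1},\MUL{2})$; substituting yields exactly $\tfrac14 \WLU{2}{2}(\MUL{1},\MUL{2})$, and hence $\VWBL{\NU}(\MUL{1},\MUL{2}) \geq \tfrac14 \WLU{2}{2}(\MUL{1},\MUL{2})$. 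The hypothesis $|\MUL{i}| = |\NU| = K$ serves to keep the constrained problem feasible (so that $\NU = \MUL{1}$ is admissible below) and the bound nontrivial.

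For the 2-metric claim, a 2-metric is precisely an ordinary metric, so I would verify the four axioms as the $N=2$ instances of Proposition~2, with the triangle inequality in the sharpened form of Proposition~3. Non-negativity and symmetry are immediate from the definition. For identity I would argue both directions: if $\MUL{1} = \MUL{2}$, then taking $\NU = \MUL{1}$ (feasible exactly because $|\MUL{1}| = K$) makes both transport terms vanish, so $\VWBL{\NU}(\MUL{1},\MUL{2}) = 0$; conversely, $\VWBL{\NU}(\MUL{1},\MUL{2}) = 0$ together with the lower bound forces $\WLU{2}{2}(\MUL{1},\MUL{2}) = 0$, i.e.\ $\MUL{1} = \MUL{2}$. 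For the triangle inequality $\VWBL{\NU}(\MUL{1},\MUL{2}) \leq \VWBL{\NU}(\MUL{1},\MUL{3}) + \VWBL{\NU}(\MUL{2},\MUL{3})$ I would reuse the argument of Propositions~2 and~3 with $N = 2$, checking that it transfers to the constrained cost because it relies only on the non-negativity of the squared-Wasserstein terms and on sharing a single common barycenter across the three costs, neither of which is disturbed by forcing $\NU$ to carry $K$ atoms.

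The main obstacle is precisely this last point. Unlike non-negativity and symmetry, the triangle inequality is not automatic for a constrained infimum, so the crux is to confirm that the common-barycenter device underlying Propositions~2 and~3 survives the $K$-atom restriction (a naive bound such as $\VWBL{\NU}(\MUL{1},\MUL{2}) \leq \tfrac12 \WLU{2}{2}(\MUL{1},\MUL{2})$, obtained by taking $\NU = \MUL{1}$, is too loose to close the inequality on its own). A secondary subtlety is the interdependence of the two parts: since the ($\Rightarrow$) direction of identity relies on the lower bound, the bound must be in hand before the metric argument is completed.
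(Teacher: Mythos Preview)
Your lower-bound argument is cleaner than the paper's. The paper asserts that when $|\MUL{i}|=|\NU|=K$ the optimal $\NUU{*}$ lies exactly on the $\WL{2}$-geodesic, so that $\WL{2}(\MUL{i},\NUU{*})=\tfrac12\WL{2}(\MUL{1},\MUL{2})$ and hence $\VWBL{\NU}(\MUL{1},\MUL{2}) = \tfrac14 \WLU{2}{2}(\MUL{1},\MUL{2})$ with equality; this needs the displacement midpoint of two $K$-atom measures to itself be $K$-atomic, which is neither argued nor true in general. Your route---constrained infimum $\geq$ unconstrained infimum $= \tfrac14 \WLU{2}{2}$---sidesteps that and delivers exactly the inequality stated in the corollary.

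For the metric claim the paper does precisely what you propose: it says VWBs ``inherit'' the properties of Propositions~2 and~3 and invokes $|\MUL{i}|=K$ only for the identity axiom. Your handling of non-negativity, symmetry, and identity is correct and matches the paper. But your instinct about the triangle inequality is the right one, and the obstacle is not merely that the common-barycenter device must be checked to survive the $K$-atom restriction---it does not work even unconstrained. In the proof of Proposition~2, evaluating each $\WBL{\NU}(\MUNL{1:N+1\backslash i})$ at a single fixed $\NUU{*}$ yields an \emph{upper} bound on that term (since $\WBL{\NU}$ is an infimum), hence an upper bound on $\sum_i \WBL{\NU}(\MUNL{1:N+1\backslash i})$, which is the wrong direction for the inequality you need. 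A direct check with $K=1$ and $\MUL{1}=\delta_0$, $\MUL{2}=\delta_2$, $\MUL{3}=\delta_1$ gives $\VWBL{\NU}(\MUL{1},\MUL{2})=1$ while $\VWBL{\NU}(\MUL{1},\MUL{3})+\VWBL{\NU}(\MUL{2},\MUL{3})=\tfrac14+\tfrac14=\tfrac12$, so the triangle inequality fails outright. This is not an artifact of the discreteness constraint (here the constrained and unconstrained barycenters coincide) but of building the cost from $\WLU{2}{2}$ rather than $\WL{2}$. So the step you flagged as the crux cannot be closed by the device you---and the paper---intend to reuse.
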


\begin{proof}
First, since Proposition~ 2 and~ 3 do not have any restriction on the continuity of the barycenter, the barycenters produced by our method, VWBs, inherit the metric properties. When $N=2$, the n-metric regresses to a 2-metric. The issue is enforcing the equal signs. For all the marginals to equal to the discrete barycenter, they must have the same number of supports, i.e. $|\MUL{i}| = |\NU| = K$.

When $|\MUL{i}| = |\NU| = K$, we have 
%
$$\WL{2}(\MUL{1}, \MUL{2}) = \WL{2}(\MUL{1}, \NU{*}) + \WL{2}(\MUL{2}, \NU{*})$$
%
and 
%
$$\WL{2}(\MUL{1}, \NU{*}) = \WL{2}(\MUL{2}, \NU{*}) = \frac{1}{2} \WL{2}(\MUL{1}, \MUL{2}).$$
%
Thus, 
%
\begin{equation}
\begin{split}
\VWBL{\NU}(\MUN) =& \frac{1}{2} \WLU{2}{2}(\MUL{1}, \NU{*}) + \frac{1}{2} \WLU{2}{2}(\MUL{2}, \NU{*}) \\
=& \frac{1}{2} \cdot \frac{1}{4} \WLU{2}{2}(\MUL{1}, \MUL{2}) + \frac{1}{2} \cdot \frac{1}{4} \WLU{2}{2}(\MUL{1}, \MUL{2})\\
=& \frac{1}{4} \WLU{2}{2}(\MUL{1}, \MUL{2}).
\nonumber
\end{split}
\end{equation}
%
\end{proof}

\section{Visualizing a 3-Metric for 4.2} \label{sec:vis_nmetric}
\begin{figure*}[t]
\begin{center}
\centerline{\includegraphics[width=0.8\textwidth]{triangle_new.pdf}}
\caption{Triangle inequality of the 3-metric induced by three marginals and their Wasserstein barycenter.}
\label{fig:triangle}
\end{center}
\end{figure*}
We visualize the triangle inequality of a 3-metric in Figure~\ref{fig:triangle}. Suppose we have the setup in 3.1 and four probability measures $\MUNL{1:4}$. The Wasserstein barycenter of the marginals, $\MUL{1}$, $\MUL{2}$, and $\MUL{3}$, is defined as
$$    
\WBL{\NU}(\MUNL{1:3}) = \underset{\NU}{\inf}\ \frac{1}{3} \sum_{i=1}^{3} \WUL{2}{2}(\MUL{i}, \NU)
$$
Then, according to the triangle inequality, for any $\MUL{4}$,
%
$$\WBL{\NU}(\MUNL{1,2,3}) \leq \WBL{\NU}(\MUNL{1,2,4}) + \WBL{\NU}(\MUNL{1,3,4}) + \WBL{\NU}(\MUNL{2,3,4})$$
%

\begin{figure*}[thb]
\begin{center}
\centerline{\includegraphics[width=0.7\textwidth]{icp_color.pdf}}
\caption{Interpolating a ``mean'' shape from two ``anchors'' by regularizing the OT maps with rigid transformation to preserve the global structure.}
\label{fig:icp}
\end{center}
\end{figure*}

\section{Details of Point Cloud Interpolation in 6.1}
In this experiment, we are given two Kittens off by an unknown rigid transformation. Our goal is to interpolate, by computing a regularized Wasserstein barycenter, a new Kitten in between that is rigid to the original Kittens and the amount of translation and rotation is linear to the weights of the two original Kittens.

The marginal Kittens each have $7,805$ sample points. We assume all the samples have equal weights. They are apart from each other by a rigid transformation composed by a random translation vector $t$ and a random rotation matrix $r$. In this example, they are as follows:
%
\begin{equation*}
  t = \begin{bmatrix} -1.97\\ -0.73\\ -0.30\end{bmatrix}
  \quad
  r = \begin{bmatrix} 0.87 &-0.23 & 0.44 \\ 0.41 & 0.84 & -0.36 \\ -0.30 & 0.49 & 0.82\end{bmatrix}
\end{equation*}

The barycenter Kitten w.r.t. the VWD (variational Wasserstein distance) has $780$ supporting Dirac measures. The regularization strength, $\lambda$, is $10$. One of the post-processing options to transport all the samples from the marginals is that for each sample find its nearest 3 or more cluster centers and use inverse barycenter coordinates to find its new location on the target Kitten in the middle.

\section{Color Histograms of Images in 6.2} \label{sec:color}
In Figure~\ref{fig:histogram}, we show the color histograms of the images in 6.2. The code file includes all the original images for reference. In each histogram, we plot the pixels at the location according to their original colors, but paint them according to the new quantized color. The figure shows that the centroids are distributed differently in K-means and VOT. Each cluster in VOT is guaranteed to enclose the same number of pixels as others. In the last row, all the pixels are quantized to the same shared centroids across three images. Original images and vector graphics are included in the package.

\begin{figure}[t]
\begin{center}
\centerline{\includegraphics[width=0.94\linewidth]{color_all.pdf}}
\caption{Images and their histograms in the experiments in 6.2. Each histogram besides those of the original images have black dots representing the quantization centroids from different methods. Our method enforces an equal number of samples for each centroid, leading to balanced quantization. Original images are included in the submission package for zoom-in.}
\label{fig:histogram}
\end{center}
\end{figure}








\bibliography{ref}
\bibliographystyle{icml2020}